\DeclareMathOperator*{\argmax}{arg\,max}
\DeclareMathOperator*{\argmin}{arg\,min}
\DeclareMathOperator*{\maxi}{max}
\newcommand{\vx}{\mathbf{x}}
\newtheorem{theorem}{Theorem}[section]
\newtheorem{definition}{Definition}
\newtheorem{problem}{Problem}
\newtheorem{lemma}{Lemma}
\newcommand{\algrule}[1][.2pt]{\par\vskip.5\baselineskip\hrule height #1\par\vskip.5\baselineskip}
\title{Certification of Distributional Individual Fairness}
\author{%
  Matthew Wicker \\
  The Alan Turing Institute \\
  \texttt{mwicker@turing.ac.uk} \\
  % examples of more authors
  \And
  Vihari Piratia \\
  University of Cambridge \\
  \texttt{vp421@cam.ac.uk} \\
  % Address \\
  % \texttt{email} \\
  \And
  Adrian Weller \\
  University of Cambridge \& \\
  The Alan Turing Institute \\
  % \texttt{email} \\
  % \And
  % Coauthor \\
  % Affiliation \\
  % Address \\
  % \texttt{email} \\
  % \And
  % Coauthor \\
  % Affiliation \\
  % Address \\
  % \texttt{email} \\
}
\begin{document}

\maketitle

\begin{abstract}
Providing formal guarantees of algorithmic fairness is of paramount importance to socially responsible deployment of machine learning algorithms. In this work, we study formal guarantees, i.e., certificates, for individual fairness (IF) of neural networks. We start by introducing a novel convex approximation of IF constraints that exponentially decreases the computational cost of providing formal guarantees of local individual fairness. We highlight that prior methods are constrained by their focus on global IF certification and can therefore only scale to models with a few dozen hidden neurons, thus limiting their practical impact. We propose to certify \textit{distributional} individual fairness which ensures that for a given empirical distribution and all distributions within a $\gamma$-Wasserstein ball, the neural network has guaranteed individually fair predictions. Leveraging developments in quasi-convex optimization, we provide novel and efficient certified bounds on distributional individual fairness and show that our method allows us to certify and regularize neural networks that are several orders of magnitude larger than those considered by prior works. Moreover, we study real-world distribution shifts and find our bounds to be a scalable, practical, and sound source of IF guarantees. 
\end{abstract}

\section{Introduction}

There is a growing concern about the potential of machine learning models to perpetuate and amplify discrimination \citep{barocas2016big}. Machine learning algorithms have been put forward to automate decision making in a variety of fairness-sensitive domains, such as healthcare \citep{davenport2019potential}, employment \citep{ding2021retiring}, and criminal justice \citep{dressel2018accuracy, zilka2022survey}. It has been demonstrated that such models produce biased outcomes that unfairly disadvantage certain individuals or groups \citep{seyyed2021underdiagnosis, yurochkin2020sensei}. To combat this, there has been a surge of interest in algorithmic fairness metrics \citep{mehrabi2021survey}. Fairness metrics provide practitioners with a means to quantify the extent of bias within a model and facilitate development of heuristics for debiasing models \citep{madras2018learning, yurochkin2019training}. However, relying solely on heuristic debiasing methods may not be satisfactory for justifying the deployment of machine learning models as they do not provide guarantees of fairness. To address this concern, there has been a recent focus on certification approaches, which provide formal guarantees that a model adheres to a specified fairness criterion \citep{john2020verifying, benussi2022individual, khedr2022certifair}. 
Providing certified guarantees of fairness is of utmost importance as it offers users, stakeholders, and regulators formal assurances that the predictions of a model align with a rigorous standards of fairness. These guarantees serve as a powerful tool to promote trust, accountability, and ethical deployment of machine learning models.

%One key metric of fairness for deep learning systems is \textit{individual fairness} which enforces the intuitive property that for all \textit{similar} individuals the neural network (NN) issues similar predictions. Individuals are similar if they differ only with respect to protected attributes or features correlated with the protected attributes. This similarity is captured by a fair distance metric, defined formally in Section~\textbf{?}. Recent works have tackled the key challenge of certifying individual fairness for neural networks. However, they fail to certify and train debiased neural networks with more than a few dozen hidden neurons. 

In this work, we study the problem of certifying  \textit{individual fairness} (IF) in neural networks (NNs). Individual fairness enforces the intuitive property that a given neural network issues similar predictions for all pairs of \textit{similar} individuals \citep{dwork2012fairness}. Individuals are considered similar if they differ only with respect to protected attributes (e.g., race, gender, or age) or features  correlated with the protected attributes. This similarity is captured by a fair distance metric which can be learned by querying human experts or extracted from observed data \citep{mukherjee2020two}. Given a NN and a fairness metric, recent works have established procedures for certifying that a NN conforms to a given IF definition, which is a crucial step in developing and deploying fair models \citep{benussi2022individual, khedr2022certifair}. %However, these approaches only scale to NNs with a few dozen neurons, a limitation that relegates IF guarantees to only the simplest problems. Current approaches focus on scale only to 
While current approaches to IF guarantees for NNs are effective for simple problems, they face scalability issues when dealing with NNs containing more than a few dozen neurons. This limitation is due to their emphasis on global individual fairness, which guarantees the IF property for every input within the NN's domain \citep{benussi2022individual, khedr2022certifair}. Although global IF certification is the gold standard, it places constraints on extreme outliers, such as ensuring loan decisions are fair to a three-year-old billionaire applying for a small business loan. Subsequently, as the dimension of the input space grows, solving an optimization problem that covers the entire domain becomes either impossible for current solvers to deal with or computationally prohibitive. 
%Scalability of these methods is hindered by their focus on global individual fairness, which ensures that the fairness property holds for any input in the entire domain of the NN. Global IF certification, while a gold standard, enforces constraints on unrealistic individuals (e.g., a 3 year old billionaire applying for a small business loan). Therefore, we focus on more realistic constraints on \textit{distributional individual fairness} (DIF) with the goal of enabling practical and formal guarantees of IF that can justify a model's socially responsible deployment.
%
On the other hand, distributional individual fairness (DIF), first proposed in \cite{yurochkin2019training}, enforces that a model's predictions are individually fair w.r.t. a family of distributions that are within a $\gamma-$Wasserstein ball of the empirical distribution over observed individuals. %While the work of \cite{yurochkin2019training} provides heuristic approaches for debiasing , 
The focus on distributions removes the constraints on unrealistic individuals while enforcing fairness on the relevant part of the input domain. As such, providing certification of DIF can provide strong guarantees of fairness that scale to even relatively large neural network models. However, prior works in DIF focus only on heuristic debiasing techniques \citep{yurochkin2019training}. 

In this work, we provide the first formal certificates of DIF. As it cannot be computed exactly, we present a framework for bounding distributional individual fairness. We start by presenting a novel convex relaxation of local IF constraints, i.e., IF with respect to a single prediction. Our approach to local IF certification offers an exponential computational speed-up compared with prior approaches. %when computing local IF certificates and enables real-time auditing of individual fairness. %In this work, we present the first method tailored for certifying distributional individual fairness. In order to do so, we first propose a novel convex relaxation of local individual fairness with orthotope constraints in the input space, enabling efficient upper and lower bounds on local individual fairness. 
Building on this efficient bound, we are able to propose an optimization problem whose solution, attainable due to the problem's quasi-convexity, produces sound certificates of DIF. %This method provides the first formal guarantees on DIF. 
Further, our certified bounds can be easily incorporated as a learning objective, enabling efficient individual fairness training.
In a series of experiments, we establish that our method is able to certify that IF holds for meaningful, real-world distribution shifts. Further, our proposed regularization is able to certifiably fair neural networks that are two orders of magnitude larger than those considered in previous works in a fraction of the time.
%We implement our certification and training algorithms and test them across a series of well-known fairness benchmarks. We highlight that the efficiency of our bounds enables us to reduce the computational time to with local individual fairness constraints from 10 hours on a multi-GPU server to less than 5 minutes on a laptop. Moreover, the efficiency of our method allows us to scale to neural networks fifty times larger than previous methods thereby allowing training of individually fair neural networks on considerably larger datasets with considerably larger neural networks. Finally, we validate the soundness and effectiveness of our distributional individual fairness bounds on a by studying datasets on a variety of real-wold distribution shifts. We find that our bounds tightly bound the effect of such distribution shifts and can therefore be reliable as a metric of individual fairness at deployment time.
We highlight the following key contributions of this work:
\begin{itemize}[leftmargin=*]
    \item We prove a convex relaxation of local individual fairness, enabling exponentially more efficient local IF certification compared with prior mixed integer linear programming approaches.
    \item We formulate a novel optimization approach to certificates of local and distribution individual fairness and leverage quasi-convex optimization to provide the first formal upper and lower bounds, i.e., certificates, on distributional individual fairness. %The first time such bounds have been leveraged to produce formal certificates
    \item We empirically demonstrate the tightness of our distributional individual fairness guarantees compared with real-world distributional shifts, and demonstrate the considerable scalability benefits of our proposed method.
\end{itemize}

%Measures roughly fall into two categories, group and individual fairness metrics.  In this work we study various notions of individual fairness which enforces the intuitive property that for all \textit{similar} individuals the neural network (NN) issues similar predictions. Individuals are considered similar accoriding to a \textit{fair distance metric} which assigns a low distance between individuals who differ with respect to protected attributes (e.g., race, gender, age) or any features highly correlated with said protected attributes, e.g., area code can be highly correlated with race.  

\section{Related Work}\label{sec:relatedworks}

Individual fairness, originally investigated by \citet{dwork2012fairness}, establishes a powerful and intuitive notion of fairness. Subsequent works have focused on defining the individual fairness metrics \citep{mukherjee2020two, yurochkin2020sensei}, expanding the definition to new domains \citep{gupta2021individual, xu2022gfairhint, doherty2023individual}, and importantly, guaranteeing that a model conforms to individual fairness \citep{john2020verifying, ruoss2020learning, benussi2022individual, khedr2022certifair, peychev2022latent}. In \citep{john2020verifying} the authors consider a relaxation of individual fairness and present methods to verify that it holds for linear classifiers. In \citep{yeom2020individual, peychev2022latent} the authors extend randomized smoothing to present statistical guarantees on individual fairness for neural networks. These guarantees are much weaker than the sound guarantees presented in \cite{benussi2022individual, khedr2022certifair} which are based on mixed integer linear programming (MILP). Both of these methods focus solely on the global notion of individual fairness which proves to only scale to neural networks with a few dozen hidden neurons. Our work relaxes the need to certify a property globally allowing us to scale to neural networks that are orders of magnitude larger than what can be considered by prior works. 
%In \cite{ruoss2020learning} the authors also use MILP to train locally certifiably individually fair representations in NLP. Which is satisfactory for understanding if a single prediction is fair, but MILP formulations are computationally costly. In contrast, the method presented by this paper is exponentially faster. 
Further, prior works have sought to debias models according an individual fairness criterion \citep{yurochkin2019training, benussi2022individual, khedr2022certifair}. In \citet{yurochkin2019training} the authors present a method for debiased training relying on PGD which is insufficient for training a NN with strong fairness guarantees \citep{benussi2022individual}. In \citep{benussi2022individual, khedr2022certifair} the authors use linear programming formulations of fairness during training which leads to strong IF guarantees but at a large computational cost and greatly limited scalability.  %The certification methods in this work can be seen as a more local and approximate,  but exponentially more efficient version of the results presented in \cite{benussi2022individual}.

Distributional individual fairness was originally proposed by \cite{yurochkin2019training} and the importance of distributional robustness of notions of fairness has been underscored in several recent woks \citep{sagawa2019distributionally, sharifi2019average, taskesen2020distributionally, mukherjee2022domain}. To the best of our knowledge, this work is the first that provides formal certificates that guarantee a model satisfies distributional individual fairness. We provide further discussion of works related to distributional robustness in Appendix~\ref{appendix:relatedworks}.
%There are several works that study the problem of fairness under data-set shifts. In \cite{lan2017discriminatory} the authors find that standard transfer learning improves accuracy at the cost of fairness. In \cite{ding2021retiring} the authors show that a model that is fair in one geographic context may be unfair in another.  %In \cite{kallus2018residual} shows that poor quality data on a given subgroup can lead to unfairness even in bias-mitigated models.
%In \cite{schrouff2022maintaining} the authors use a causal model to understand when and how fairness will degrade in different contexts. 
%In \cite{schumann2019transfer}, the authors study how to use representation learning techniques to perform transfer learning such that the model maintains fairness, but unlike this work they must modify the NN and do not offer provable certificates of fairness. In \cite{schrouff2022maintaining} the authors collect many related works on fairness adaptation and show how, through a causal lens, many of them introduce assumptions that do not hold in practice. In contrast, the method presented here can guarantee individual fairness for a given  classifier and individual with non-restrictive assumptions on the form of the neural network. 
\section{Background}

%In this section, we define individual fairness and cover prior works for training individually fair neural networks. 
%We consider a supervised learning scenario where we are given a dataset of $n_{\mathcal{D}}$-many inputs and labels, $\mathcal{D} = \{x^{(i)}, y^{(i)}\}_{i=1}^{n_{\mathcal{D}}}$, with inputs $x^{(i)} \in \mathbb{R}^n$, and corresponding target outputs $y^{(i)} \in \mathbb{R}^{m}$. 
% The tasks studied here are all binary classification tasks, thus $y^{(i)} \in \{0,1\}$.
We consider a supervised learning scenario in which we have a set of $n$ feature-label pairs drawn from an unknown joint distribution $\{(x^{(i)}, y^{(i)})\}_{i=1}^{n} \sim P^{0}(x,y)$ with $x \in \mathbb{R}^{m}$ and $y \in \mathbb{R}^{k}$. 
We consider a feed forward neural network (NN) as a function $f^\theta:\mathbb{R}^{m}\to\mathbb{R}^k$, parametrised by a vector  $\theta \in \mathbb{R}^{p}$. % containing all the weights and biases of the network. 
We define a local version of the individual fairness definition that is studied in \citep{john2020verifying, benussi2022individual} to evaluate algorithmic bias:
\begin{definition}\textbf{Local $\delta$-$\epsilon$-Individual Fairness}\label{def:IFdef} Given $\delta > 0$ and $\epsilon \geq 0$, we say that $f^{\theta}$ is locally individually fair (IF) with respect to fair distance $d_{\text{fair}}$ and input $x'$ iff: 
$$ \forall x'' \ s.t. \ d_{\text{fair}}(x', x'') \leq \delta \implies |f^{\theta}(x') - f^{\theta}(x'')| \leq \epsilon $$
{\upshape For a given value of $\delta$ we use the notation  $\mathcal{I}(f^{\theta}, x, \delta)$ to denote the function that returns the largest value of $\epsilon$ such that the local IF property holds.  Further, $\mathcal{I}(f^{\theta}, x, \delta) = 0$ holds if a network is perfectly locally individually fair. Throughout the paper, we will refer to the value $\epsilon$ as the IF \textit{violation} as this is the amount by which the perfect fairness property is violated.}
\end{definition}
Several methods have been investigated for crafting fair distance metrics \citep{mukherjee2020two}. Methods in this work can flexibly handle a majority of the currently proposed metrics. We discuss the particulars in Appendix~\ref{appendix:computations}. While certifying local IF is sufficient for proving that a given prediction is fair w.r.t. a specific individual, it does not suffice as proof that the model will be fair for unseen individuals at deployment time. To this end, prior works have investigated a global individual fairness definition. Global $\epsilon$-$\delta$ individual fairness holds if and only if $\forall x \in \mathbb{R}^{m}, \mathcal{I}(f^{\theta}, x, \delta) \leq \epsilon$. %If a neural network satisfies global individual fairness, then one can be sure that for any individual seen at deployment time the neural network's prediction will be fair. 
However, global individual fairness is a strict constraint and training neural networks to satisfy global individual fairness comes at a potentially prohibitive computational cost. For example, training a neural network with only 32 hidden units can take up to 12 hours on a multi-GPU machine \citep{benussi2022individual}. 

\section{Distributional Individual Fairness}

In this section, we describe \textit{distributional individual fairness} (DIF) as a notion that can ensure that neural networks are individually fair at deployment time while not incurring the considerable overhead of global training and certification methods. DIF ensures that individuals in the support of $P^{0}$ and distributions close to $P^{0}$ are treated fairly. To measure the distributions close to $P^{0}$ we consider the $p$-Wasserstein distance between distributions, $W_p(P^{0}, \cdot)$. Further, we denote the ball of all distributions within $\gamma$ $p$-Wasserstein distance of $P^{0}$ as $\mathcal{B}_{\gamma, p}(P^{0})$.  To abbreviate notation, we drop the $p$ from the Wasserstein distance, noting that any integer $p \geq 1$ can be chosen.
\begin{definition}\textbf{Distributional $\gamma$-$\delta$-$\epsilon$-Individual Fairness }\label{def:DIFdef} Given individual fairness parameters $\delta > 0$ and $\epsilon \geq 0$, and distributional parameter $\gamma > 0$, we say that $f^{\theta}$ is distributionally individually fair (DIF) with respect to fair distance $d_{\text{fair}}$ if and only if the following condition holds: 
\begin{align}
    \sup_{P^{\star} \in \mathcal{B}_{\gamma}(P^{0})} \bigg( \mathbb{E}_{x \sim P^{\star}}[\mathcal{I}(f^{\theta}, x, \delta)] \bigg) \leq \epsilon 
    \label{eq:difconstraint1}
    %&\argmax_{x \in \text{supp}(P^{\star})} \leq c \epsilon \label{eq:difconstraint2}
\end{align}
\end{definition}
Intuitively, distributional individual fairness ensures that for all distributions in a Wasserstein ball around $P^{0}$ we maintain individual fairness on average. In Figure~\ref{fig:examplefig}, we visualize the difference between local, distributional, and global IF in a toy two dimensional settings. Prior work that focuses on DIF only provides a heuristic method based on the PGD attack to train debiased NNs and therefore provide no formal guarantees \citep{yurochkin2019training}. Below we state two key computational problems in the study of guarantees for distributional individual fairness:
\begin{problem}\label{prob:certification} Certification of Distributional Individual Fairness:
Given a neural network, $f^{\theta}$ and parameters $\delta$ and $\gamma$, compute a value $\overline{\epsilon}$ such that $\overline{\epsilon}$ provably upper-bounds the left hand side of Equation~\eqref{eq:difconstraint1}. An optimal certificate is the smallest such  $\overline{\epsilon}$ such that Equation~\eqref{eq:difconstraint1} holds.
\end{problem}
The problem of certifying distributional individual fairness is critical to ensuring that the definition can be used in practice as it 
%Certification for DIF 
allows for regulators and auditors to have similar assurances to global individual fairness, but only on what is deemed to be the relevant portion of the input space as controlled by $\gamma$. 
\begin{problem}\label{prob:training} Distributionally Individually Fair Training:
Given a randomly initialized neural network, $f^{\theta}$ and parameters $\delta$ and $\gamma$ and a dataset drawn from $P^{0}$, learn a neural network that approximately minimizes both the standard loss, $\mathcal{L}$, and the DIF violation: 
\begin{align}\label{eq:learningproblem}
    \argmin_{\theta \in \Theta} \mathbb{E}_{P^{0}}\big[\mathcal{L}(f^{\theta}, x, y)\big] + \sup_{P^{\star} \in \mathcal{B}_{\gamma, p}(P^{0})} \bigg( \mathbb{E}_{P^{\star}}[\mathcal{I}(f^{\theta}, x, \delta)] \bigg)
\end{align}
\end{problem}
Problem 2 is tantamount to certified debiased training of a neural network w.r.t the DIF definition. Certified debiasing places a focus on formal guarantees that is not present in prior distributional individual fairness work. %We highlight that unlike the approach proposed by \citep{yurochkin2019training}, our approach optimizes the $\mathcal{I}$ function as we are focused on learning NNs with a posteriori certifiable individual fairness whereas the method proposed in \cite{yurochkin2019training} optimizes only a heuristic lower-bound on individual fairness. %However, regularization of the second constraint based on the $\gamma$ Wasserstein ball cannot be straight-forwardly achieved with previous methods, especially in light of the fact that we would like a posteriori certified guarantees on the quality of the regularized network.  

\begin{figure}
    \centering
\includegraphics[width=0.95\textwidth]{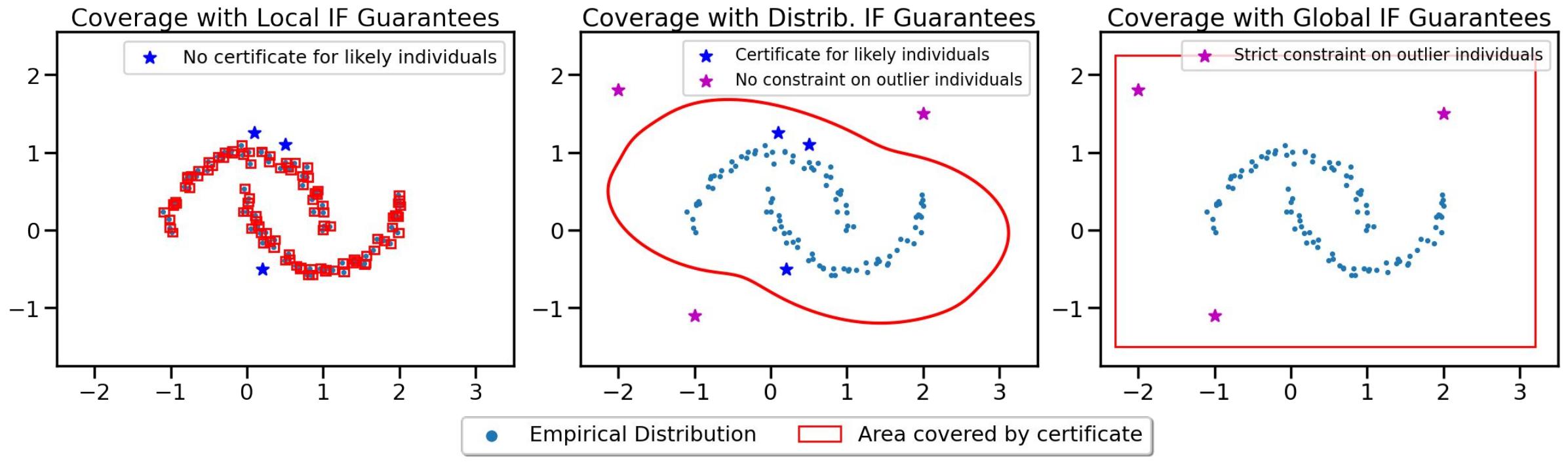}
    \caption{Using the halfmoons dataset we illustrate the benefits of DIF. \textbf{Left:} local IF does not do enough to provide guarantees on individuals we are likely to see. \textbf{Right:} Global IF provides a strict constraint on anomalous points, leading to scalability issues. \textbf{Center}: Distributional IF covers likely individuals without over-constraining the problem leading to scalable and reliable fairness guarantees.}
    \label{fig:examplefig}
\end{figure}

\section{Methodology}

To address Problems~\ref{prob:certification} and \ref{prob:training} we start by providing a novel and efficient computation for certificates of local individual fairness. We then build on this efficient guarantee to compute formal bounds and proposed regularizers for certified DIF training. %address Problems~\ref{prob:certification} and \ref{prob:training}. 
%In this section, we start by providing a novel and efficient solution to computing certified bounds on local individual fairness. Using this bound we present solutions to Problems~\ref{prob:certification} and \ref{prob:training}. 

\subsection{Certifying Local Individual Fairness}

A key step in certification of distributional individual fairness is proving that a given neural network satisfies local individual fairness i.e., Definition~\ref{def:IFdef}. While there are many methods for efficiently certifying local robustness properties, individual fairness must be certified w.r.t. a ball of inputs whose shape and size is determined by $d_{\text{fair}}$. Existing works in crafting individual fairness metrics use Mahalanobis metrics to express the fair distance between two individuals, i.e., $d_{\text{fair}}(x, x') = \sqrt{(x-y)^\top S^{-1}(x-y)}$ where $S$ is a positive semi-definite matrix $\in \mathbb{R}^{m \times m}$. In order to certify this more expressive class of specifications prior works use linear bounding techniques as part of a larger MILP formulation \citep{benussi2022individual}. In this work, we show that one can in fact use exponentially more efficient bound propagation techniques developed for robustness certification to certify local individual fairness. In order to do so, we provide the following bound that over-approximates the $d_{\text{fair}}$ metric ball with an orthotope: 
\newpage
\begin{theorem}\label{lem:intervalboundlemma}
Given a positive semi-definite matrix $S \in \mathbb{R}^{m \times m}$, a feature vector $x' \in \mathbb{R}^{m}$, and a similarity theshold $\delta$, all vectors $x'' \in \mathbb{R}^m$ satisfying $d_S(x', x'') \leq \delta$ are contained within the axis-aligned orthope:
$$ \Big[ x' - \delta\sqrt{d}, x' + \delta\sqrt{d} \Big] $$
where $d = diag(S)$, the vector containing the elements along the diagonal of $S$.
\end{theorem}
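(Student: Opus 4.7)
The plan is to treat the Mahalanobis ball $\{x'' : d_S(x',x'') \leq \delta\} = \{x'' : (x''-x')^\top S^{-1}(x''-x') \leq \delta^2\}$ as an ellipsoid centred at $x'$ and bound its extent along each coordinate axis separately. Writing $v = x'' - x'$, it suffices to show that the constraint $v^\top S^{-1} v \leq \delta^2$ implies $|v_i| \leq \delta \sqrt{S_{ii}}$ for every coordinate $i \in \{1,\dots,m\}$, since then componentwise we have $x''_i \in [x'_i - \delta\sqrt{S_{ii}}, x'_i + \delta\sqrt{S_{ii}}]$, which is exactly the claimed orthotope.

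The coordinate bound $|v_i| \leq \delta\sqrt{S_{ii}}$ follows from a Cauchy--Schwarz step in the inner product induced by $S^{-1}$. Since $S$ is positive semi-definite (and, for the argument to go through with $S^{-1}$, positive definite), write $v_i = e_i^\top v = (S^{1/2} e_i)^\top (S^{-1/2} v)$, and apply the standard Euclidean Cauchy--Schwarz inequality to obtain
\begin{equation*}
|v_i| \leq \|S^{1/2} e_i\|_2 \cdot \|S^{-1/2} v\|_2 = \sqrt{e_i^\top S e_i}\,\sqrt{v^\top S^{-1} v} \leq \sqrt{S_{ii}}\cdot \delta.
\end{equation*}
Taking this bound for each $i$ yields containment in the orthotope $[x' - \delta\sqrt{d},\, x' + \delta\sqrt{d}]$, where $\sqrt{d}$ is understood componentwise with $d = \mathrm{diag}(S)$.

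There is not really a deep obstacle here; the main subtlety I would be careful about is keeping track of which matrix ($S$ or $S^{-1}$) appears where. The inequality $v^\top S^{-1} v \leq \delta^2$ is a constraint in the $S^{-1}$-geometry, but the resulting semi-axis of the ellipsoid along direction $e_i$ is governed by $e_i^\top S e_i = S_{ii}$, not by $(S^{-1})_{ii}$. An equivalent way to present the argument, which I would include as a sanity check, is to maximise the linear functional $e_i^\top v$ subject to $v^\top S^{-1} v \leq \delta^2$ by Lagrange multipliers; the optimum is attained at $v^\star = \delta S e_i / \sqrt{S_{ii}}$ with optimal value $\delta \sqrt{S_{ii}}$, matching the Cauchy--Schwarz bound and confirming that the orthotope bound is tight along each axis. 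Finally, if $S$ is only positive semi-definite and not strictly positive definite, I would either restrict to the range of $S$ (where $S^{-1}$ is well-defined in the pseudoinverse sense) or note that the degenerate directions contribute no slack, so the same componentwise bound continues to hold.
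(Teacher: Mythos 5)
Your proof is correct and is essentially the paper's argument: the paper also bounds the extent of the Mahalanobis ball along each coordinate axis by maximizing $e_i^\top(x''-x')$ subject to $d_S(x',x'')\leq\delta$ (via a change of variables $S=R^\top R$ reducing to a linear functional over the Euclidean unit ball), obtaining the same value $\delta\sqrt{S_{ii}}$ that your Cauchy--Schwarz step and Lagrange sanity check produce. Your explicit handling of the merely positive semi-definite case (pseudoinverse / restriction to the range of $S$) is in fact more careful than the paper's own treatment.
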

%\vspace{-1em}
\iffalse
\begin{proof}
Consider the case of $\delta = 1$. The width of the desired interval along the $i^{th}$ dimension can be obtained by solving $\max_{x''} e_i^T(x'' - x')\ s.t.\ d_{S}(x', x'') \leq 1$ where $e_i$ is the $i^{th}$ canonical basis vector. Let $S = R^T R$ where $R$ is the matrix square root of $S$, we can re-write the optimization in the rotated space by change of variables, $u:=R^{T}(x'' - x')$. W.r.t. $u$ the optimization becomes $\max_{u}R^T_i u \ \  s.t.\ ||u|| \leq 1$ where $R_i$ is the $i^{th}$ column of $R$. The solution of this optimization problem is then $R_i^T R_i / ||R_i|| = \sqrt{R_i^T R_i} = \sqrt{S}_{i,i}$. Generalizing to each dimension $i$ we have the bound $\sqrt{diag(S)}$, as desired. Because $S^{-1}$ is a linear transformation this bound remains sound when scaled by $\delta$ or translated to be centered at an arbitrary feature vector $x$. Proof of results related to Theorem~\ref{lem:intervalboundlemma} can be found in ~\cite{emrich2013optimal, MahaBound}.
\end{proof}
\fi
%
Theorem~\ref{lem:intervalboundlemma} holds as well  for weighted $\ell_p$ fairness metrics by setting the diagonal entries of $S$ to the weights for metric provides an over approximation of the $\ell_\infty$ metric, which is also a valid over-approximation for any $p \leq \infty$.  
The key consequence of Theorem~\ref{lem:intervalboundlemma} is that it directly enables efficient certification of local $\delta$-$\epsilon$ individual fairness. By over-approximating the fair distance with an axis-aligned orthotope, we are able to leverage the efficient methods developed for robustness certification \citep{gowal2018effectiveness}. Such methods allow us to take the given input interval and compute output bounds $[y^{L}, y^{U}]$ such that $\forall x \in [x' - \delta\sqrt{d}, x' + \delta\sqrt{d}], f^{\theta}(x) \in [y^{L}, y^{U}]$. Subsequently, it is clear to see that if $|y^{U} - y^{L}| \leq \epsilon$ then we have certified that $\forall x' \in [x - \delta\sqrt{d}, x + \delta\sqrt{d}] \implies |f^{\theta}(x) - f^{\theta}(x')| \leq \epsilon$ which proves that local individual fairness holds according to Definition~\ref{def:IFdef}. Computing such output bounds requires the computational complexity of two forward \cite{mirman2018differentiable, gowal2018effectiveness}. MILP approaches, on the other hand, requires complexity cubic in the largest number of neurons in a layer plus requires exponentially (in the number of problem variables) many MILP iterations \cite{benussi2022individual}. 
We denote the function that takes the interval corresponding to an individual $x$ and value $\delta$ and returns $|y^{U} - y^{L}|$ with the notation $\overline{\mathcal{I}(f^{\theta}, x, \delta)}$. Given that bound propagation methods are sound, this value over-approximates the $\epsilon$ function in Definition~\ref{def:IFdef}. We emphasize that this approach to local fairness certification only takes two forward passes through the neural network, and can therefore be used as real-time decision support. We provide exact details of the computations of this upper bound in Appendix~\ref{appendix:computations}. %It is clear that a for a given individual, $x'$, and neural network, $f^{\theta}$, proving that $\forall x'' \in [x' - \delta\sqrt{d}, x' + \delta\sqrt{d}] \implies |f^{\theta}(x) - f^{\theta}(x')| \leq \epsilon$ is sound proof that local $\delta$-$\epsilon$-IF is satisfied by $f^{\theta}$ at $x'$. Moreover, because this is a computation over intervals the efficient methods developed for robustness certification can all be leveraged in this setting \citep{gowal2018effectiveness}. 
By exploring the orthotope using methods developed for adversarial attacks, e.g., PGD \citep{madry2017towards}, we can find an input $x^{\star}$ that approximately maximizes $|f^{\theta}(x') - f^{\theta}(x^{\star})|$. Subsequently, this value is a valid lower-bound on the local IF violation around $x$. As such, we will denote the value produced by PGD as \underbar{$\mathcal{I}(f^{\theta}, x, \delta)$}.% , as PGD attacks represent a valid lower bound on the $\epsilon$ function in Definition~\ref{def:IFdef}.

%\textbf{Visualizing Distributional IF} Before discussing certification of the DIF property, we highlight that computing the local certificate, $\overline{\mathcal{I}(f^{\theta}, x, \delta)}$, requires only two forward passes through the network architecture, details provided in Appendix~\ref{appendix:computations}. As such, it is suitable for real time prediction monitoring. Such monitoring can overcome a weakness of DIF. Unlike global IF, DIF does not cover the entire domain, visualized in Figure~\ref{fig:examplefig}. While this comes with many advantages, i.e., efficiency and scalability, one cannot completely rule out anomalous individuals, purple stars in Figure~\ref{fig:examplefig}. While DIF does not consider such individuals at certification time, our efficient local IF certificates can be used as real-time prediction auditing to ensure that all individuals are treated fairly at inference time.

\subsection{Certifying Distributional Individual Fairness}

In this section we leverage the efficient upper bound on $\mathcal{I}(f^{\theta}, x, \delta)$ to prove bounds on distributional individual fairness. We first highlight that computing the Wasserstein distance between two known distributions is non-trivial, and that in general we do not have access to the joint distribution $P^{0}$. Thus, computing the Wasserstein ball around the data generating distribution is infeasible. To overcome this, as is traditionally done in the distributional robustness literature, we make the practical choice of certifying distributional individual fairness w.r.t. the empirical distribution $\hat{P}^{0}$ which is the probability mass function with uniform density over a set of $n$ observed individuals \citep{sinha2017certifying}. While this introduces approximation in the finite-sample regime, we can bound the absolute difference between our estimate and the true value by using Chernoff's inequality, discussion in Appendix~\ref{appendix:computations}. %of $\epsilon^{\star}$ with concentration inequalities. Using Chernoff's inequality, we can enforce that a finite sample estimate is within absolute error $\tau$ with probability $1- \lambda$ by ensuring that the number of samples $n$ is at least $log(\lambda/2)/(-2\tau^2)$, discussion in Appendix~\ref{appendix:computations}.
%to be within $\tau$ of the true expectation ( l.h.s of Equation~\ref{eq:optimproblem}) with probability $1- \lambda$, we need only to ensure that $n$ is at least $log(\lambda/2)/(-2\tau^2)$ as prescribed by Chernoff's inequality, discussion in Appendix~\ref{appendix:computations}.  
Given the empirical distribution of individuals as well as IF parameters $\gamma$ and $\delta$, we pose an optimization objective whose solution is the tightest value of $\epsilon$ satisfying Definition~\ref{def:DIFdef}.
%
\iffalse
\begin{subequations}
\begin{align}\label{eq:optimproblem}
    \begin{split}
    \text{maximize}\quad \dfrac{1}{n}\sum_{i=1}^{n} &\mathcal{I}(f^{\theta}, x^{(i)} + \phi^{(i)}, \delta)
    \end{split}
    \begin{split}\label{eq:validconstraint}
    &s.t.\ \ \phi^{(i)} \in \mathbb{R}^{m},
    \end{split}\\
    \begin{split}\label{eq:gammaconstraint}
    &\dfrac{1}{n}\sum_{i=1}^{n} || \phi^{(i)} ||^p \leq \gamma^p
    \end{split}
\end{align}
\end{subequations}
\fi
\begin{align}\label{eq:optimproblem}
    \epsilon^{*} = \maxi_{\phi^{(1..n)}}\quad \dfrac{1}{n}\sum_{i=1}^{n} &\mathcal{I}(f^{\theta}, x^{(i)} + \phi^{(i)}, \delta),\quad  s.t., \quad \phi^{(i)} \in \mathbb{R}^{m},\ \dfrac{1}{n}\sum_{i=1}^{n} || \phi^{(i)} ||^p \leq \gamma^p
\end{align}
%The solution, $\epsilon^{\star}$, corresponds to the optimal value of $\epsilon$ that satisfying Definition~\ref{def:DIFdef}. 
Unfortunately, computing the optimal values of $\phi^{(i)}$ even when $n = 1$ % for even a single atom in the support of $\hat{P}^{0}$ 
is a non-convex optimization problem which is known to be NP-hard \citep{katz2017reluplex}. We restate this optimization problem in terms of our bounds on $\mathcal{I}(f^{\theta}, x, \delta)$ to get formal lower (\underline{$\epsilon$}) and upper ($\overline{\epsilon}$) bounds on $\epsilon^{\star}$ such that \underline{$\epsilon$} $\leq \epsilon^{\star} \leq \overline{\epsilon}$.

\subsection{Bounding Distributional Individual Fairness} 
%In this section, we provide upper and lower bounds on the maximization in Equation~\eqref{eq:optimproblem}. 
\vspace{-0.5em}
\paragraph{Lower-bound on DIF}
Observe that any selection of $\{\phi^{(i)} \}_{i=1}^{n}$ satisfying the constraints represents a feasible solution to the optimization problem and therefore a lower-bound on the maximization problem of interest. Thus, the lower-bound can be stated as:
%By leveraging a first-order optimization of our local individual fairness certification we can arrive at lower-bounds on the distributional individual fairness violation.  
\begin{align}\label{eq:lowerbound}
\text{\underbar{$\epsilon$}} = \maxi_{\phi^{(1..n)}}\quad \dfrac{1}{n}\sum_{i=1}^{n} &\ \text{\underbar{$\mathcal{I}(f^{\theta}, x^{(i)} + \phi^{(i)}, \delta)$}} \quad  s.t., \quad \phi^{(i)} \in \mathbb{R}^{m},\ \dfrac{1}{n}\sum_{i=1}^{n} || \phi^{(i)} ||^p \leq \gamma^p
\end{align}
%Where this bound is a valid lower-bound for any feasible, non-optimal assignment of $\phi$. Computing the optimal assignment of $\phi$ would upper-bound $\epsilon^{*}$; however, given the non-convexity of the 
We can optimize this lower bound to be tight by observing that for any given $i$, the function \underbar{$\mathcal{I}(f^{\theta}, x^{(i)} + \phi^{(i)}, \delta)$} is differentiable w.r.t. $\phi^{(i)}$ and is therefore amenable to first-order optimization. In particular, we denote $\phi^{(i)}_{0}$ to be a randomly selected, feasible assignment of $\phi^{(i)}_{0}$, we can then gradient ascent to find a locally optimal assignment of $\phi^{(i)}$:
$$ \phi^{(i)}_{j+1} \gets \phi^{(i)}_{j} + \alpha \nabla_{\phi^{(i)}} \text{\underbar{$\mathcal{I}(f^{\theta}, x^{(i)} + \phi^{(i)}, \delta)$}} $$
where $\alpha$ is a learning rate parameter. Subsequently, one could use a projection step in order to ensure that the constraint in Equation~\eqref{eq:optimproblem} is never violated. We note that this is a strict lower-bound unless the Equation~\eqref{eq:lowerbound} is solved exactly and \underbar{$\mathcal{I}(f^{\theta}, x^{(i)} + \phi^{(i)}, \delta)$} $= \mathcal{I}(f^{\theta}, x^{(i)} + \phi^{(i)}, \delta) $ for all $i$. Finally, we denote the computed solution to Equation~\eqref{eq:lowerbound} as \underline{$I_\gamma(f^{\theta}, X, \delta)$} where $\gamma$ is the distributional parameter of the DIF specification and $X$ is the tensor containing the $n$ individual feature vectors. %Unless solved optimally, this optimization yields a valid lower-bound on the DIF violation.

\paragraph{Upper-bound on DIF}
An upper-bound on the maximization in Equation~\eqref{eq:optimproblem}, i.e., $\overline{\epsilon} \geq \epsilon^{\star}$, constitutes a certificate that the model provably satisfies distributional individual fairness for all $\epsilon$ greater than $\overline{\epsilon}$. Given that the optimization posed by Equation~\eqref{eq:optimproblem} is highly non-convex, searching for an optimal solution cannot be guaranteed to converge globally without knowledge of global Lipschitz constants, which are difficult to compute efficiently for large NNs \citep{fazlyab2019efficient}. Instead, we transform the optimization problem from one over the input space to the space of over-approximate local IF violations: %We propose to over-approximate all possible realizations of $\phi^{(i)}$ in the original formulation by transforming the problem:
\begin{align}\label{eq:upperbound}
    \overline{\epsilon} = \maxi_{\varphi^{(1..n)}} \quad \dfrac{1}{n}\sum_{i=1}^{n} &\overline{\mathcal{I}(f^{\theta}, x, \delta + \varphi^{(i)} )} \quad s.t. \quad   \varphi^{(i)} \in \mathbb{R}^{+},\ \dfrac{1}{n}\sum_{i=1}^{n} || \varphi^{(i)} ||^p \leq \gamma^p
\end{align}
The key change to the formulation is that we are no longer considering input transformations parameterized by $\phi^{(i)}$, but instead consider an over-approximation of all possible $\phi^{(i)}$ that are within a $\varphi^{(i)}$-radius around the point $x^{(i)}$. Moreover, it is clear that picking any assignment of the $\varphi^{(i)}$ values will \textit{not} constitute an upper-bound on the DIF violation. However, the global maximizing assignment of $\varphi^{(i)}$ values \textit{is} guaranteed to over-approximate the value of Equation~\eqref{eq:optimproblem}, formally stated in Theorem \ref{thm:upperboundthm}. 
\begin{theorem}\label{thm:upperboundthm}
  Given an optimal assignment of $\{\varphi^{(i)} \}_{i=1}^n$ in Equation~\eqref{eq:upperbound}, the corresponding $\overline{\epsilon}$ is a sound upper-bound on the DIF violation of the model and therefore, is a certificate that no $\gamma$-Wasserstien distribution shift can cause the individual fairness of the model to exceed $\overline{\epsilon}$. 
\end{theorem}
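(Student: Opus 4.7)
The plan is to establish $\epsilon^{\star} \leq \overline{\epsilon}$ by a feasibility-preserving reduction: from any feasible $\{\phi^{(i)}\}$ in Equation~\eqref{eq:optimproblem} I construct a feasible $\{\varphi^{(i)}\}$ in Equation~\eqref{eq:upperbound} whose objective is at least as large as the original $\phi$-objective. Applied to the maximizing $\phi^{\star}$ this yields a feasible candidate $\varphi^{\mathrm{cand}}$ whose Equation~\eqref{eq:upperbound} objective is $\geq \epsilon^{\star}$; since the optimal $\varphi^{\star}$ attains the supremum of that objective we then have $\overline{\epsilon} \geq \epsilon^{\star}$, which is precisely soundness, and hence a certificate that no $\gamma$-Wasserstein shift can push the DIF violation above $\overline{\epsilon}$.

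The natural candidate is $\varphi^{(i)} := d_{\text{fair}}(x^{(i)}, x^{(i)} + \phi^{(i)})$. Feasibility of the candidate follows because, under the standing identification of the Wasserstein ground metric with $d_{\text{fair}}$, the scalar $\varphi^{(i)}$ equals $\|\phi^{(i)}\|$, so the constraint $\frac{1}{n}\sum_i \|\phi^{(i)}\|^p \leq \gamma^p$ transfers directly to $\frac{1}{n}\sum_i (\varphi^{(i)})^p \leq \gamma^p$, and non-negativity is automatic. For the per-sample objective comparison I would argue geometrically: by the triangle inequality for $d_{\text{fair}}$, any $x''$ with $d_{\text{fair}}(x'', x^{(i)} + \phi^{(i)}) \leq \delta$ also satisfies $d_{\text{fair}}(x'', x^{(i)}) \leq \delta + \varphi^{(i)}$, and the shifted center $x^{(i)} + \phi^{(i)}$ itself lies in the $(\delta + \varphi^{(i)})$-ball around $x^{(i)}$. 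Both of these points are therefore contained in the IBP input orthotope constructed for $\overline{\mathcal{I}(f^{\theta}, x^{(i)}, \delta + \varphi^{(i)})}$ via Theorem~\ref{lem:intervalboundlemma}, so bound-propagation soundness yields
\begin{equation*}
\mathcal{I}(f^{\theta}, x^{(i)} + \phi^{(i)}, \delta) \;\leq\; \overline{\mathcal{I}(f^{\theta}, x^{(i)}, \delta + \varphi^{(i)})}.
\end{equation*}
Averaging over $i$ produces the required comparison between the two objectives and completes the reduction.

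The main obstacle I anticipate is aligning the Wasserstein ground metric with $d_{\text{fair}}$: if the two only agree up to a constant (for example a Mahalanobis $d_{\text{fair}}$ paired with an $\ell_p$ Wasserstein cost), the construction needs a scaling factor in $\varphi^{(i)}$ and the feasibility step becomes a genuine inequality that must be carried through carefully; this is best handled explicitly or absorbed into a standing assumption about the metric pair. A smaller subtlety is that $\overline{\mathcal{I}}$ bounds the full output diameter over the orthotope rather than a one-sided deviation from the center, and the containment argument above uses exactly this to avoid introducing a spurious factor of two. Finite-sample discrepancies between $P^{0}$ and $\hat{P}^{0}$ are orthogonal to this theorem and are absorbed by the Chernoff-style bound discussed earlier in the section.
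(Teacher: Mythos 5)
Your proposal is correct and follows essentially the same route as the paper's proof: map the maximizer $\{\phi^{\star(i)}\}$ of Equation~\eqref{eq:optimproblem} to the feasible radii $\varphi^{(i)} = \|\phi^{\star(i)}\|$, establish the per-sample domination $\mathcal{I}(f^{\theta}, x^{(i)} + \phi^{\star(i)}, \delta) \leq \overline{\mathcal{I}(f^{\theta}, x^{(i)}, \delta + \varphi^{(i)})}$, and conclude via feasibility and global optimality in Equation~\eqref{eq:upperbound}. The only difference is that you spell out the ball-containment/IBP-soundness justification of the per-sample step (and correctly note that using the output-interval diameter avoids any factor of two), a step the paper's proof leaves implicit.
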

The proof of Theorem \ref{thm:upperboundthm} is contained in Appendix~\ref{appendix:upperboundproof}. Given that the function $\overline{\mathcal{I}(f^{\theta}, x, \delta + \varphi^{(i)} )}$ is a continuous and monotonically increasing function, we have that the maximization problem in Equation~\eqref{eq:upperbound} is quasi-convex. After proving this function is also H\"{o}lder continuous, one can guarantee convergence in finite time to the optimal solution with bounded error using recent advancements in quasi-convex optimization \citep{hu2020convergence, agrawal2020disciplined, hu2022quasi}. As the global optimal solution is guaranteed to be an upper-bound, we can use the methods for quasi-convex optimization to compute a certificate for the DIF property. In the Appendix we prove the H\"{o}lder continuity of $\mathcal{I}$ and conduct numerical convergence experiments to validate this theory. Given that the result of this optimization is a certified upper bound on the DIF value, we denote its result with $\overline{I_\gamma(f^{\theta}, X, \delta)}$ where $\gamma$ is the distributional parameter of the DIF formulation. While $\overline{\epsilon}$ is a valid upper-bound on the DIF violation for the given dataset, in the low-data regime one may have non-negligible error due to poor approximation of the data distribution. The finite sample approximation error can be bounded by an application of Hoeffding's inequality as we state in Lemma \ref{lem:finitesample}
\begin{lemma}\label{lem:finitesample}
  Given an upper-bound, $\overline{\epsilon}$ (a global maximizing assignment to Equation~\eqref{eq:upperbound}) we can bound the error induced by using a finite sample from above by $\tau$. That is, the estimate $\overline{\epsilon}$ is within $\tau$ of the true expectation of $\overline{\epsilon}$ with probability $1-\lambda$ as long as $n$ is at least $(-1/2\tau^2)\log (\lambda/2)$
\end{lemma}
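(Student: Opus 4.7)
The plan is to recognize the statement as a routine one-sided concentration bound for the empirical mean, and apply Hoeffding's inequality to the bounded random variables $\overline{\mathcal{I}(f^{\theta},x,\delta+\varphi^{(i)})}$. The only ingredients needed are: (i) a boundedness guarantee on these random variables, and (ii) independence of the samples $x^{(i)}\sim P^{0}$.

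First, I would fix an optimal assignment $\{\varphi^{(i),\star}\}_{i=1}^{n}$ realising $\overline{\epsilon}$ in Equation~\eqref{eq:upperbound}, and define the bounded random variables $Z_i := \overline{\mathcal{I}(f^{\theta},x^{(i)},\delta+\varphi^{(i),\star})}$. Since $f^{\theta}$ maps into the unit simplex (classification) or, more generally, a bounded output region, the quantity $\overline{\mathcal{I}(\cdot)} = |y^{U}-y^{L}|$ obtained from interval bound propagation lies in $[0,1]$ (or, more generally, in an interval of width bounded by the diameter of the output range, which we rescale to unit width without loss of generality). Thus each $Z_i$ satisfies $Z_i\in[0,1]$ and its expectation under $x\sim P^{0}$ equals the ``true'' $\overline{\epsilon}$ to which the finite-sample estimate should converge.

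Next, I would apply Hoeffding's inequality to the i.i.d.\ sample $\{Z_i\}_{i=1}^{n}$:
\begin{equation*}
  \Pr\!\left(\left|\tfrac{1}{n}\sum_{i=1}^{n} Z_i - \mathbb{E}[Z_1]\right| \geq \tau\right) \;\leq\; 2\exp(-2n\tau^{2}).
\end{equation*}
Setting the right-hand side equal to $\lambda$ and solving for $n$ yields $n \geq -\frac{1}{2\tau^{2}}\log(\lambda/2)$, which is exactly the rate claimed in the lemma. Taking complements gives the asserted high-probability bound $|\overline{\epsilon} - \mathbb{E}[\overline{\epsilon}]| \leq \tau$ with probability at least $1-\lambda$.

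The main obstacle is a subtle one: the optimiser $\varphi^{(i),\star}$ produced by solving Equation~\eqref{eq:upperbound} is itself a function of the sampled dataset $X$, so the $Z_i$'s are not, a priori, independent of each other. I would address this either by (a) noting that for any \emph{fixed} choice of $\varphi$'s Hoeffding applies, and then observing that the per-sample separable structure of the objective (each $\varphi^{(i)}$ only interacts with $x^{(i)}$) means the optimiser decomposes into per-point optimisers, preserving the i.i.d.\ structure of the evaluations; or, more robustly, (b) invoking a uniform concentration argument over the (compact) feasible set of $\varphi$-assignments, in which case the sample complexity picks up only a covering-number logarithmic factor that can be absorbed into the $\log(\lambda/2)$ term at the cost of a constant. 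Either route leaves the stated scaling $n \gtrsim \tau^{-2}\log(1/\lambda)$ intact.
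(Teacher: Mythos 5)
Your proposal matches the paper's argument: the paper proves this lemma simply by invoking Hoeffding's inequality for the bounded (softmax-output, hence $[0,1]$-valued) per-sample violation terms and solving $2\exp(-2n\tau^{2})\leq\lambda$ for $n$, which is exactly your calculation. One minor caveat: your route (a) for handling the data-dependence of the optimal $\varphi^{(i)}$ is not quite right, since the shared budget constraint $\frac{1}{n}\sum_{i}\|\varphi^{(i)}\|^{p}\leq\gamma^{p}$ couples the per-point optimisers so they do not decompose; the paper does not address this subtlety at all, and your fallback route (b) is the sounder way to patch it.
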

\subsection{Training Distributionally Individually Fair NNs}

It is well-known that guarantees for certification agnostic neural networks can be vacuous \citep{gowal2018effectiveness, wicker2021bayesian}. To combat this, strong empirical results have been obtained for neural networks trained with certified regularizers \citep{benussi2022individual, khedr2022certifair}. In this work, we have proposed three novel, differentiable bounds on certified individual fairness. We start by proposing a loss on certified local individual fairness similar to what is proposed in \citet{benussi2022individual} but substituted with our significantly more efficient interval bound procedure:
$$ \mathcal{L}_{\text{F-IBP}} = \mathcal{L}(f^\theta, X, Y) + \alpha \overline{\mathcal{I}_{0}(f^\theta, X, \delta)}$$
where $\alpha$ is a weighting that trades off the fair loss with the standard loss. When $\gamma = 0$, we recover exactly a local constraint on individual fairness for each $x^{(i)} \in X$. % Thus, this regularization is exactly what is proposed in \citet{benussi2022individual} but using interval bound propagation for training as in \citet{gowal2018effectiveness}. 
By increasing the $\gamma$ from zero we move from a local individual fairness constraint to a DIF constraint:
$$ \mathcal{L}_{\text{U-DIF}} = \mathcal{L}(f^\theta, X, Y) + \alpha \overline{\mathcal{I}_{\gamma}(f^\theta, X, \delta)}$$
where U-DIF stands for upper-bound on DIF. This loss comes at the cost of increased computational time, but is a considerably stronger regularizer. In fact, given that the upper bound on certified DIF might be vacuous, and therefore difficult to optimize, at the start of training, we also propose a training loss that optimizes the lower bound on DIF as it may be empirically easier to optimize and serves as a middle ground between F-IBP and U-DIF:
$$ \mathcal{L}_{\text{L-DIF}} = \mathcal{L}(f^\theta, X, Y) + \alpha \text{$\underline{\mathcal{I}_{\gamma}(f^\theta, X, \delta)}$}$$

\section{Experiments}

%In this section, we study the effect of our proposed training methods and empirically validate our certification method for distributional individual fairness. We start here by describing the datasets and metrics used in our evaluation. After which we provide a comparative analysis of five different individual fairness training methods. We then study the scalability of our proposed training method and the tightness of of our upper and lower bounds. Finally, we use real-world distribution shifts stemming from changes over time and to geographic context. We use these distribution shifts in order to study the potential real-world impact of our methods.

In this section we empirically validate our proposed method on a variety of datasets. We first describe the datasets and metrics used. We then cover a wide range of experimental ablations and validate our bounds on real-world distribution shifts. We conclude the section with a study of how IF training impacts other notions of fairness.\footnote{Code to reproduce experiments can be found at: \url{https://github.com/matthewwicker/DistributionalIndividualFairness}}

\textbf{Datasets} We benchmark against the Adult, Credit, and German datasets from the UCI dataset repository \citep{uci}. The German or Satlog dataset predicts credit risk of individuals. The Credit dataset predicts if an individual will default on their credit. The Adult dataset predicts if an individuals income is greater than 50 thousand dollars. We additionally consider three datasets from the Folktables datasets, Income, Employ and Coverage, which are made up of millions of data points curated from the 2015 to 2021 US census data \citep{ding2021retiring}. The Income dataset predicts if an individual makes more than 50 thousand US dollars, the Employ dataset predicts if an individual is employed, and the Coverage dataset predicts if an individual has health insurance. For each dataset gender is considered the protected attribute.

\textbf{Metrics} We consider four key metrics. For performance, we measure accuracy which is computed over the entire test-set. Next we consider the local fairness certificates (LFC), which is taken as $1/n \sum_{i=1}^{n} \overline{\mathcal{I}_{0}(f^{\theta}, x_i, \delta)}$. Next we consider the empirical distributional fairness certificate (E-DFC) which is the maximum of the LFC taken over a set of observed distribution shifts. That is, we observe a set of individuals drawn from a distribution shift, $x^{(k)} \sim P^{k}$, where $P^k$ represents a shift of geographic context (e.g., $P^0$ being individuals from California and $P^1$ being individuals from North Carolina) or a shift in time. % (e.g., $P^0$ being individuals observed in 2015 and $P^1$ being individuals observed in 2016). 
E-LFC is then defined as $\max_{j \in [k]} 1/n \sum_{i=0}^{n} \overline{\mathcal{I}_{0}(f^{\theta}, x^{(j)}_i, \delta)}$. Finally, we consider the adversarial distributional fairness certification (A-DFC) which is our computed value of $\overline{\mathcal{I}_{\gamma}(f^{\theta}, x_i, \delta)}$. Unless stated otherwise, we use $\delta = 0.05$, $\gamma = 0.1$, and with respect to 1000 test-set individuals. Complete experimental details are given in Appendix~\ref{appendix:experimentaldetails}.

\begin{table}[]\hspace{-0.00em}\footnotesize\addtolength{\tabcolsep}{-3.85pt}\centering
\begin{tabular}{l|llll|l|llll|l|llll|}
\cline{2-5} \cline{7-10} \cline{12-15}
                             & \multicolumn{4}{c|}{Income}                                                                                                      &  & \multicolumn{4}{c|}{Employ}                                                                                                      &  & \multicolumn{4}{c|}{Coverage}                                                                                                   \\ \cline{2-5} \cline{7-10} \cline{12-15} 
                             & \multicolumn{1}{l|}{Acc}            & \multicolumn{1}{l|}{LFC}            & \multicolumn{1}{l|}{E-DFC}          & A-DFC          &  & \multicolumn{1}{l|}{Acc}            & \multicolumn{1}{l|}{LFC}            & \multicolumn{1}{l|}{E-DFC}          & A-DFC          &  & \multicolumn{1}{l|}{Acc}            & \multicolumn{1}{l|}{LFC}            & \multicolumn{1}{l|}{E-DFC}         & A-DFC          \\ \cline{1-5} \cline{7-10} \cline{12-15} 
\multicolumn{1}{|l|}{FTU}    & \multicolumn{1}{l|}{\textbf{0.820}} & \multicolumn{1}{l|}{0.999}          & \multicolumn{1}{l|}{0.999}          & 1.000          &  & \multicolumn{1}{l|}{\textbf{0.809}} & \multicolumn{1}{l|}{0.891}          & \multicolumn{1}{l|}{0.906}          & 0.999          &  & \multicolumn{1}{l|}{\textbf{0.721}} & \multicolumn{1}{l|}{1.000}          & \multicolumn{1}{l|}{1.000}         & 1.000          \\ \cline{1-5} \cline{7-10} \cline{12-15} 
\multicolumn{1}{|l|}{SenSR}  & \multicolumn{1}{l|}{0.782}          & \multicolumn{1}{l|}{0.895}          & \multicolumn{1}{l|}{0.931}          & 0.995          &  & \multicolumn{1}{l|}{0.743}          & \multicolumn{1}{l|}{0.247}          & \multicolumn{1}{l|}{0.300}          & 0.639          &  & \multicolumn{1}{l|}{0.709}          & \multicolumn{1}{l|}{0.538}          & \multicolumn{1}{l|}{0.865}         & 0.899          \\ \cline{1-5} \cline{7-10} \cline{12-15} 
\multicolumn{1}{|l|}{F-IBP}  & \multicolumn{1}{l|}{0.771}          & \multicolumn{1}{l|}{0.076}          & \multicolumn{1}{l|}{0.092}          & 0.130          &  & \multicolumn{1}{l|}{0.743}          & \multicolumn{1}{l|}{0.040}          & \multicolumn{1}{l|}{0.090}          & 0.178          &  & \multicolumn{1}{l|}{0.699}          & \multicolumn{1}{l|}{0.162}          & \multicolumn{1}{l|}{0.253}         & 0.288          \\ \cline{1-5} \cline{7-10} \cline{12-15} 
\multicolumn{1}{|l|}{L-DIF} & \multicolumn{1}{l|}{0.763}          & \multicolumn{1}{l|}{0.035}          & \multicolumn{1}{l|}{0.051}          & 0.095          &  & \multicolumn{1}{l|}{0.738}          & \multicolumn{1}{l|}{0.025}          & \multicolumn{1}{l|}{0.051}          & 0.091          &  & \multicolumn{1}{l|}{0.698}          & \multicolumn{1}{l|}{0.101}          & \multicolumn{1}{l|}{0.128}         & 0.136          \\ \cline{1-5} \cline{7-10} \cline{12-15} 
\multicolumn{1}{|l|}{U-DIF} & \multicolumn{1}{l|}{0.725}          & \multicolumn{1}{l|}{\textbf{0.002}} & \multicolumn{1}{l|}{\textbf{0.002}} & \textbf{0.042} &  & \multicolumn{1}{l|}{0.724}          & \multicolumn{1}{l|}{\textbf{0.000}} & \multicolumn{1}{l|}{\textbf{0.006}} & \textbf{0.067} &  & \multicolumn{1}{l|}{0.681}          & \multicolumn{1}{l|}{\textbf{0.000}} & \multicolumn{1}{l|}{\textbf{0.042}} & \textbf{0.071} \\ \cline{1-5} \cline{7-10} \cline{12-15} 
\end{tabular}
\vspace{1.5em}
\caption{Performance of each training method across three folktables datasets. For each dataset we give the accuracy (Acc, $\uparrow$), local IF violation (LFC, $ \downarrow$), empirical DIF violation (E-DFC, $\downarrow$), and the adversarial DIF violation (A-DFC, $\downarrow$). We observe a consistent drop in accuracy across datasets as we enforce more strict DIF constraints. However, we notice orders of magnitude decrease in the individual fairness violation across all three metrics as methods impose stricter constraints. }\label{tab:trainingcomp}
\end{table}

\subsection{Comparing Training Methods}

In Table~\ref{tab:trainingcomp}, we compare different individual fairness training methods on three datasets from \cite{ding2021retiring}. For each dataset, we train a two layer neural network with 256 hidden neurons per layer. We compare with fairness through unawareness (FTU) which simply removes the sensitive feature and sensitive subspace robustness (SenSR) \citep{yurochkin2019training}. %We hold all hyper-parameters for the networks, e.g., activation function and optimizer constant for each method to ensure a fair comparison. 
We observe a consistent decrease in accuracy as we increase the strictness of our constraint. While FTU has the best performance, we are unable to compute strong IF guarantees. This result is expected given that it is well-known that guaranteeing the performance of certification-agnostic networks is empirically challenging \citep{gowal2018effectiveness}. Similarly, we observe that SenSR has a positive effect on the DIF guarantee while paying a 2-5\% accuracy cost. When only enforcing local guarantees with F-IBP, we observe a 3-5\% decrease in accuracy, but achieve an order of magnitude decrease in the local and distributional fairness violations. Using both L-DIF and U-DIF results in the strongest IF and DIF guarantees, albeit at a larger accuracy cost. Each regularization method proposed provides a different fairness-accuracy trade-off which should be carefully chosen to suit the needs of a given application. %Further discussion of the performance w.r.t other fairness metrics is considered in Section~\ref{sec:performancetradeoff}.

\begin{figure}
    \centering
\includegraphics[width=0.9\textwidth]{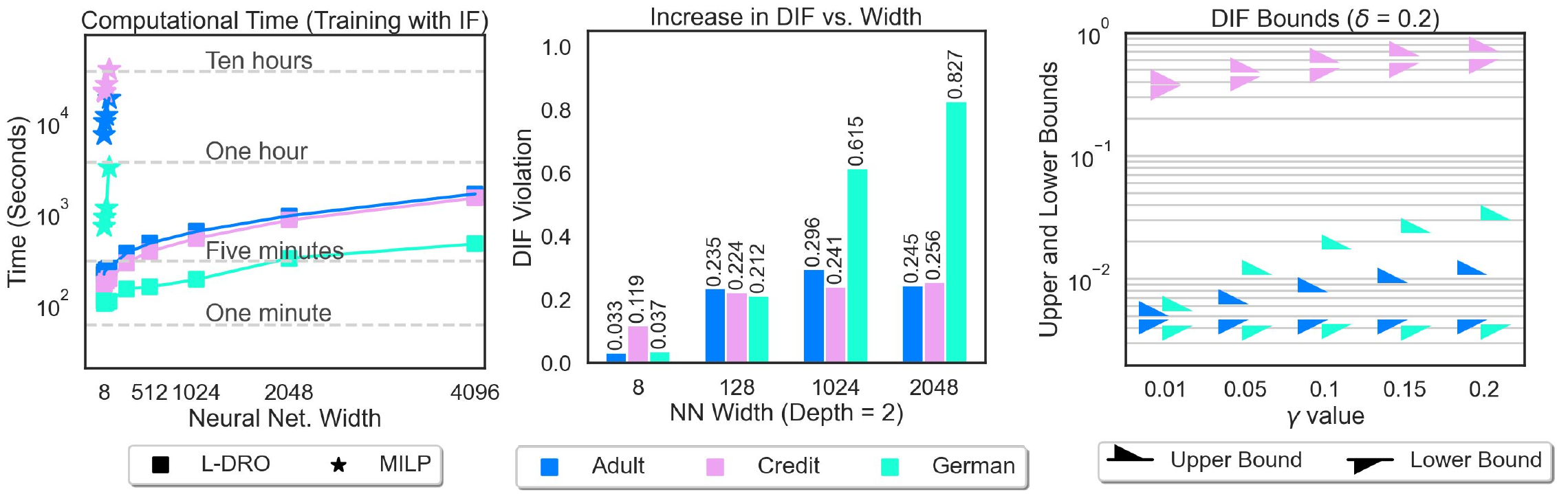}
    \caption{Empirical algorithm analysis for our upper and lower bounds on DIF. \textbf{Left:} Computational time comparison between MILP training (times reported in \citep{benussi2022individual}) and L-DIF demonstrates our methods considerable advantage in scalability. \textbf{Center:} As the NNs get larger our DIF certificates get looser, as expected with bound propagation methods \citep{gowal2018effectiveness}. \textbf{Right:} As we increase $\gamma$ we also increase the gap between our upper and lower bounds due to the over-approximate nature of our upper-bounds.}
    \label{fig:boundanaly}
\end{figure}

\subsection{Empirical Bound Analysis}

In this section, we provide an overview of our algorithm's performance as we vary architecture size and the distributional parameter $\gamma$. 
In the left-most plot of Figure~\ref{fig:boundanaly}, we use stars to plot the computational times reported to train a single hidden layer neural network from \citep{benussi2022individual}. We use squares to denote the amount of time used by our most computationally demanding method (L-DIF). Our method scales up to 4096 neurons without crossing the one hour threshold. We report an extended version of this experiment in Appendix~\ref{appendix:experiments}. Moreover, the results from this paper were run on a laptop while the timings from \citep{benussi2022individual} use a multi-GPU machine.

In the center plot of Figure~\ref{fig:boundanaly}, we show how the value of our upper bound on the DIF violation grows as we increase the size of the neural network. As the neural network grows in size, we observe a steady increase in our upper-bounds. This is not necessarily because the learned classifier is less individually fair. The steady increase in the bound value can be attributed to the approximation incurred by using bound propagation methods. It is well-known that as the neural networks grow larger, certification methods become more approximate \citep{gehr2018ai2, wicker2021bayesian}. 

In the right-most plot of Figure~\ref{fig:boundanaly}, we show how our upper and lower bounds respond to increasing the distributional parameter $\gamma$. As $\gamma$ grows, we expect that our bounds become loose due to the over-approximation inherent in our bounds. Here we consider a two-layer 16 neuron per-layer NN.  Indeed we observe in the right-most plot of Figure~\ref{fig:boundanaly}, that the gap between our upper and lower bounds grow to almost an order of magnitude when we increase $\gamma$ from 0.01 to 0.2. %In future, the tightness of our bounds could be improved with tighter certification techniques to compute $\overline{\mathcal{I}_{\gamma}(f^\theta, X, \delta)}$.% We hope that future work will sharpen the initial bounds provided by this paper. [Make a statement about how the bound gap grows really large for German which is only represented by 800 individuals in the training set]

\begin{figure}
    \centering
\includegraphics[width=1.0\textwidth]{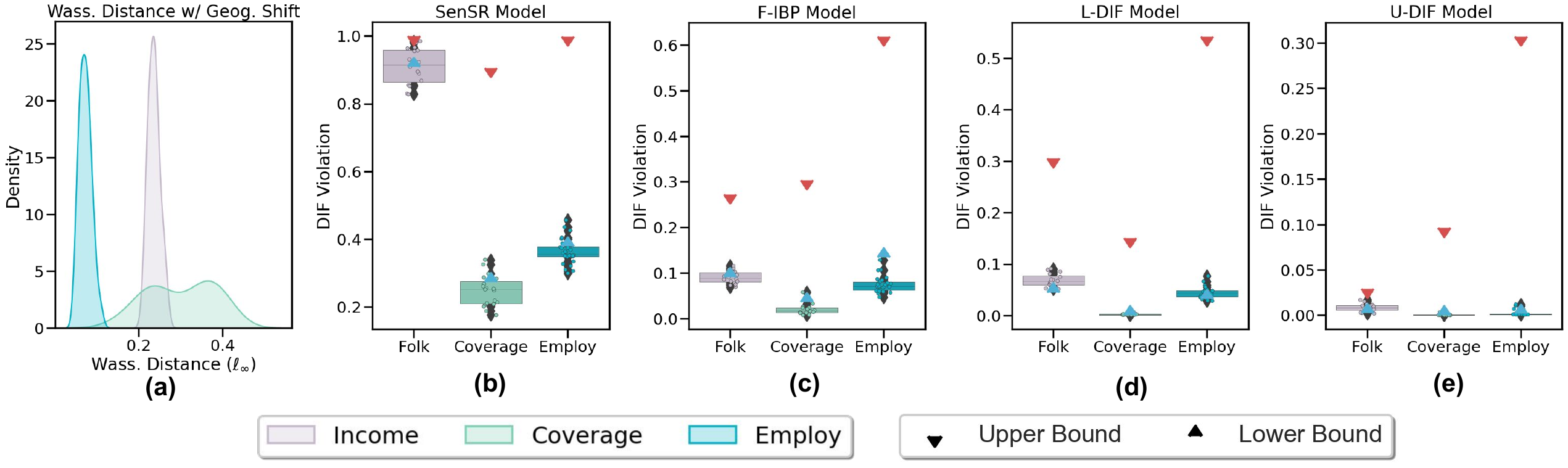}
    \vspace{-1em}
    \caption{Evaluating our bounds versus real-world distribution shifts. \textbf{Column (a):} An empirical distribution of Wasserstein distances between the distribution of individuals from different pairs of states, we certify w.r.t the upper quartile of these distributions. \textbf{Columns (b) - (e):} We plot the local fairncess certificates (LFC) for each of the shifted dataset using a boxplot. We then plot our lower bound on the worst-case DIF violation as a blue triangle and our upper bound on the worst-case DIF violation as a red triangle.}
    \label{fig:realworldshift}
\end{figure}

\subsection{Certification for Real-World Distribution Shifts}

In this section we compare our adversarial bounds on DIF against real-world distribution shifts. In Figure~\ref{fig:realworldshift} we study our ability to bound a shift in geographical context. We use the folktables dataset \citep{ding2021retiring} and assume we can only train our model using data from California. We would then like to ensure that a our model is individually fair when applied to other states. We start by computing the Wasserstein distance between the data observed for different states. The distribution of these values is plotted in  Figure~\ref{fig:realworldshift} (a). By estimating the pairwise Wasserstein distance between the states data we can estimate what $\gamma$. In practice, one would use domain knowledge to estimate $\gamma$. Each dataset has a different worst-case $\gamma$ value, i.e., 0.15 for Employ versus 0.4 for Coverage. In Figure~\ref{fig:realworldshift} (b)-(e) we then compute our upper bounds (red triangle) and lower bounds (blue triangle) for each observed value of $\gamma$. We also plot the observed local individual fairness certificates centered at each of the shifted datasets, i.e., the local fairness of our model when applied to North Carolina etc., and plot this as a box plot. We observe that our lower-bound tightly tracks the worst-case observed individual fairness. Our upper-bound is over-approximate but reasonably tight, especially for NNs trained with DIF guarantees. The validation of our upper and lower bounds against real world DIF violations highlights the value of our bounds as a scalable, practical, and sound source of guarantees for individual fairness such that model stakeholders can justify deployment.

\section{Conclusion}

In conclusion, our proposed method addresses the crucial need for scalable, formal guarantees of individual fairness. Our novel convex approximation of IF constraints enables efficient real-time audits of individual fairness by significantly reducing the computational cost local IF certification. We introduced the first certified bounds on DIF and demonstrated their applicability to significantly larger neural networks than previous works. Our empirical investigation of real-world distribution shifts further validated the scalability, practicality, and reliability of the proposed bounds. Our bounds on DIF violations offer a robust source of guarantees for individual fairness, enabling model stakeholders to justify the deployment of fair machine learning algorithms.

\textbf{Limitations} This work improves the scalability of prior IF guarantees by more than two orders of magnitude. However, the models we consider are smaller than models that may be deployed in practice we hope to further improve scalability in future work. %In addition, this work applies solely on classification and regression settings, in future work we hope to see DIF adapted to a richer class of problems.

\textbf{Broader Impact} This work focuses on fairness quantification and bias reduction which is one critical aspect of developing socially responsible and trustworthy machine learning (ML). We highlight that this work focuses on individual fairness which is only one aspect of fairness in machine learning and on its own does not constitute a complete, thorough fairness evaluation. Experimental analysis of the impact of DIF on group fairness is in Appendix \ref{sec:performancetradeoff}. The intention of this work is to improve the state of algorithmic fairness in ML, and we do not foresee negative societal impacts.

\bibliography{bib}
\bibliographystyle{bibstyle} 

\clearpage
\newpage
\appendix

\section{Experimental Details}\label{appendix:experimentaldetails}

\begin{table}[]\footnotesize\addtolength{\tabcolsep}{-3.85pt}\centering
\begin{tabular}{|l|c|c|c|c|c|c|c|}
\hline
\multicolumn{1}{|c|}{Dataset} & \multicolumn{1}{l|}{Depth} & \multicolumn{1}{l|}{Width} & \multicolumn{1}{l|}{Learning Rate} & \multicolumn{1}{l|}{Epochs} & \multicolumn{1}{l|}{Delta} & \multicolumn{1}{l|}{Gamma} & \multicolumn{1}{l|}{Optimizer} \\ \hline
Adult                         & 2                          & 256                        & 0.001                              & 50                          & 0.02                       & 0.025                      & Adam (momentum=0.9)            \\ \hline
Credit                        & 2                          & 256                        & 0.01                               & 50                          & 0.02                       & 0.025                      & Adam (momentum=0.9)            \\ \hline
German                        & 2                          & 256                        & 0.0025                             & 50                          & 0.02                       & 0.025                      & Adam (momentum=0.9)            \\ \hline
Income                        & 2                          & 256                        & 0.001                              & 50                          & 0.02                       & 0.025                      & Adam (momentum=0.9)            \\ \hline
Coverage                      & 2                          & 256                        & 0.001                              & 50                          & 0.02                       & 0.025                      & Adam (momentum=0.9)            \\ \hline
Employ                        & 2                          & 256                        & 0.001                              & 50                          & 0.02                       & 0.025                      & Adam (momentum=0.9)            \\ \hline
\end{tabular}\vspace{1em}
\caption{Hyperparameters for the base model of each dataset. We keep all parameters across models the same with the exception of learning rate which is fine-tuned according to best accuracy on a validation set.}\label{tab:hyperparams}
\end{table}

In this section, we report the hyperparameters of each base model used in our paper, details in Table~\ref{tab:hyperparams}. The only hyperparameter that is tuned is done per dataset using a 10\% validation split. The best learning rate is then chosen after a grid search over 20 evenly spaced values. We have chosen a model architecture that is considerably larger than what is used by previous certification works in order to understand if the claimed scalability benefits of our method holds in practice over a half dozen datasets and compared to real-world distribution shifts. Indeed, we find in the main text that despite our base model being considerably larger than previous works, we are able to get strong IF and DIF guarantees despite the models size. In our further experiments, we keep all hyperparameters constant unless otherwise noted. For example, in our exploration of increased width we only vary the width holding all other hyperparameters in Table~\ref{tab:hyperparams} constant.

\section{Additional Experimental Ablations}\label{appendix:experiments}

In this section, we report further experimental abalations that validate the effectiveness of our proposed method. We start by studying the empirical convergence of our approach to the true global optimal value which is computed numerically. We then provide an extended analysis of a real world distribution shift which stems from population shift over time (specifically between 2015 and 2022). Finally, we report on the scalability of our proposed F-IBP training method where we find that it has remarkable scalability benefits. 

\begin{figure}[h]
    \centering
\includegraphics[width=1.0\textwidth]{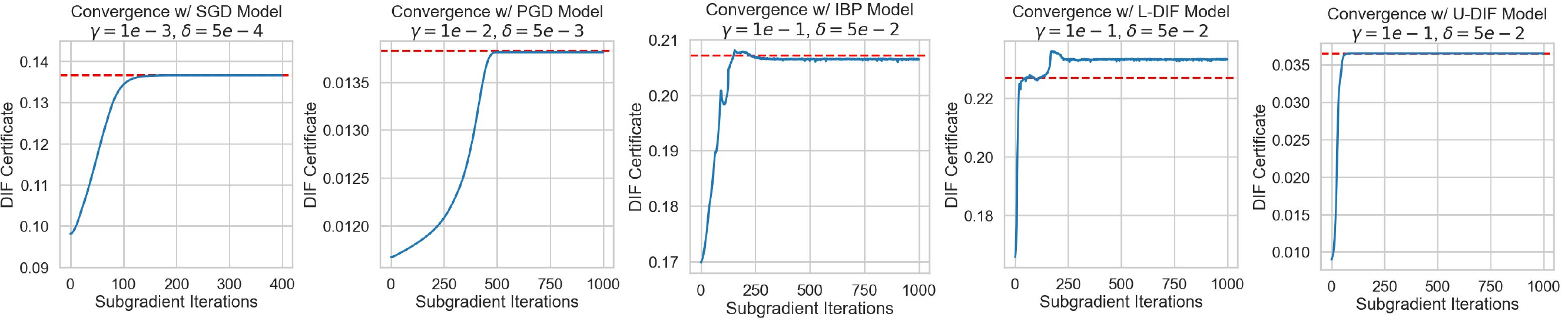}
    \caption{Convergence analysis for subgradient methods solution to Equation~\eqref{eq:upperbound}. Blue line plots the value of the optimization problem over a fixed number of iterations while the red dashed line indicates the numerically optimal solution to the optimization problem. We consider the Income dataset on networks trained with various levels of DIF regularization. From left to right we run our convergence analysis on neural networks trained with SGD, SenSR, F-IBP, L-DIF, and U-DIF regularization.  }
    \label{fig:convergence}
\end{figure}

\begin{table}[H]\centering
\begin{tabular}{c|ccc|}
\cline{2-4}
                            & \multicolumn{3}{c|}{Numerical Error}                                     \\ \cline{2-4} 
                            & \multicolumn{1}{c|}{Income}  & \multicolumn{1}{c|}{Employ}   & Coverage  \\ \hline
\multicolumn{1}{|c|}{SGD}   & \multicolumn{1}{c|}{-0.0143} & \multicolumn{1}{c|}{0.03706}  & 0.113219  \\ \hline
\multicolumn{1}{|c|}{SenSR} & \multicolumn{1}{c|}{0.06885} & \multicolumn{1}{c|}{0.03659}  & -0.033412 \\ \hline
\multicolumn{1}{|c|}{F-IBP} & \multicolumn{1}{c|}{0.00820} & \multicolumn{1}{c|}{-0.00098} & 0.022244  \\ \hline
\multicolumn{1}{|c|}{L-DRO} & \multicolumn{1}{c|}{0.05840} & \multicolumn{1}{c|}{-0.00132} & 0.018893  \\ \hline
\multicolumn{1}{|c|}{U-DRO} & \multicolumn{1}{c|}{-0.0021} & \multicolumn{1}{c|}{0.008271} & -0.007846 \\ \hline
\end{tabular}\vspace{1em}\caption{For each combination of regularization method and Folktables dataset we compare the error between a subgradient solution to Equation~\eqref{eq:upperbound} and the numerically optimal value. Positive entries indicates that the solution found by the subgradient method was greater than that of the numerical algorithm and visa versa for negative entries. We highlight that on average the actual value of the optimization was roughly 0.3, therefore error on the order of 1e-2 is acceptable.}\label{tab:numericalerror}
\end{table}

\subsection{Convergence Experiments}

In Section~\ref{appendix:proofs} we provide proofs that solving for a global optimum of Equation~\eqref{eq:upperbound} is a true upper bound to the DIF violation and that any suitable subgradient algorithms should converge to a global optimum in finite time. In this Section, we discuss the experimental convergence of our U-DIF algorithm to the global optimum.  In order to approximately compute the true global optimum, we use the following numerical scheme. We assume we would like to compute the DIF violation according to $\delta = 0.1$ and $\gamma = 0.05$ (exact numbers vary by network and are given in Figure~\ref{fig:convergence}). We further assume that we are given 50 individuals to compute the DIF i.e., $n=50$. We start by evaluating the function $\overline{\mathcal{I}(f^{\theta}, x^{(i)}, \delta + \varphi^{(i,k)})}$ for each individual and for each $k \in [K]$ (we choose $k=500$) values of $\varphi$ evenly spaced between 0.0 and 2.4 as this (more generally $(n\gamma) - \delta$) is an upper bound to the radius that any one individual can have without breaking the Wasserstein constraint. After evaluating $\overline{\mathcal{I}(f^{\theta}, x^{(i)}, \delta + \varphi^{i,k})}$ for each of the $k$-many $\varphi^{(i,k)}$ values for all individuals, we exhaustively enumerate all possible assignments of $\{\varphi^{(i,k)}\}_{i=1}^{50}$ such that $1/50 \sum_{i=1}^{50} \varphi^{(i)}$ is less than or equal to 0.05 ($\gamma$). As $K \rightarrow \infty$, this produces an exact computation of the global maximum of Equation~\eqref{eq:upperbound}.

Once a numerical solution has been computed, we run our U-DIF optimization using the method of \cite{hu2020convergence}  and we plot the results for the Income dataset in Figure~\ref{fig:convergence}. We find that regardless of how the neural network is trained, our algorithm quickly converges to the global optimum assignment of $\{\varphi^{(i)}\}_{i=1}^{n}$. We note that in the center plot of Figure~\ref{fig:convergence}, corresponding to the F-IBP trained network, we converge to a value slightly below the global optimum; however, this is within the expected convergence error. In order to better understand this expected convergence error, we run the experiment from Figure~\ref{fig:convergence} across each of the folktables datasets, and report the numerical error of our U-DIF computed value in Table~\ref{tab:numericalerror}. Only in cases where the value reported is negative do we converge to something smaller than the value produced by the numerical solution, and in each case the numerical error of the subgradient procedure converges to within tolerable error as theoretically expected.

\subsection{Extended results on real-world shifts}

In this Section we briefly describe Figure~\ref{fig:timeshift} which presents a complementary empirical study to that of Figure~\ref{fig:realworldshift} in the main text. In the far left hand side of Figure~\ref{fig:timeshift} we plot the Wasserstein distance between pairs of datasets that differ only in the year the data was collected. For the Folktables datasets this varies between 2015 and 2021. We find that the data shift over time is less severe than the data shift over geographic location. Again we use the upper quartile of these distributions as the value that we would like to certify. In Figure~\ref{fig:timeshift} (b)-(e) we plot the distribution of IF violation for an observed temporal shift as a box plot. For each dataset, we observe that more strict DIF regularization corresponds to better performance with respect to empirical DIF violations. Moreover, in Figure~\ref{fig:timeshift} (b)-(e) we plot our computed upper and lower DIF bounds. We find, as in the main text, that our lower-bound tightly tracks the empirically observed worst-case DIF violation, while our upper bound is a sound over-estimate of the DIF violation.

\subsection{Extended scalability results}

In Figure~\ref{fig:fibptimings} we report the computational time required by our proposed F-IBP method as well as the computational times reported by \citet{benussi2022individual}. We highlight that the times reported by \citet{benussi2022individual} utilize a multi-GPU machine while our experiments are conducted using only a consumer-grade laptop. Despite a considerable compute disadvantage, our methods never takes longer than 5 minutes to train neural networks with more than 4000 neurons whereas the training required by the MILP procedure can take up to 10 hours to train a 64 neuron NN. This impressive scalability improvement underscores the value of our proposed method.

\begin{table}[]\hspace{-0.0em}\footnotesize\addtolength{\tabcolsep}{-3.85pt}\centering
\begin{tabular}{l|llll|l|llll|l|llll|}
\cline{2-5} \cline{7-10} \cline{12-15}
                            & \multicolumn{4}{c|}{Income}                                                                                                      &  & \multicolumn{4}{c|}{Employ}                                                                                                       &  & \multicolumn{4}{c|}{Coverage}                                                                                                     \\ \cline{2-5} \cline{7-10} \cline{12-15} 
                            & \multicolumn{1}{l|}{D. Par.}        & \multicolumn{1}{l|}{Eq. Od}        & \multicolumn{1}{l|}{Eq. Op}        & IF. Par       &  & \multicolumn{1}{l|}{D. Par}        & \multicolumn{1}{l|}{Eq. Od}        & \multicolumn{1}{l|}{Eq. Op}         & IF. Par        &  & \multicolumn{1}{l|}{D. Par}        & \multicolumn{1}{l|}{Eq. Od}        & \multicolumn{1}{l|}{Eq. Op}         & IF. Par       \\ \cline{1-5} \cline{7-10} \cline{12-15} 
\multicolumn{1}{|l|}{SGD}   & \multicolumn{1}{l|}{0.074}          & \multicolumn{1}{l|}{\textbf{0.027}} & \multicolumn{1}{l|}{0.006}          & \textbf{0.000} &  & \multicolumn{1}{l|}{0.012}          & \multicolumn{1}{l|}{0.103}          & \multicolumn{1}{l|}{0.003}           & 0.011          &  & \multicolumn{1}{l|}{\textbf{0.006}} & \multicolumn{1}{l|}{\textbf{0.023}} & \multicolumn{1}{l|}{\textbf{0.0082}} & 0.012          \\ \cline{1-5} \cline{7-10} \cline{12-15} 
\multicolumn{1}{|l|}{SenSR} & \multicolumn{1}{l|}{0.009}          & \multicolumn{1}{l|}{0.139}          & \multicolumn{1}{l|}{0.037}          & 0.021          &  & \multicolumn{1}{l|}{0.049}          & \multicolumn{1}{l|}{0.156}          & \multicolumn{1}{l|}{0.017}           & 0.041          &  & \multicolumn{1}{l|}{0.055}          & \multicolumn{1}{l|}{0.112}          & \multicolumn{1}{l|}{0.074}           & 0.009          \\ \cline{1-5} \cline{7-10} \cline{12-15} 
\multicolumn{1}{|l|}{F-IBP} & \multicolumn{1}{l|}{0.013}          & \multicolumn{1}{l|}{0.081}          & \multicolumn{1}{l|}{0.014}          & \textbf{0.000} &  & \multicolumn{1}{l|}{0.030}          & \multicolumn{1}{l|}{0.119}          & \multicolumn{1}{l|}{\textbf{0.0009}} & 0.042          &  & \multicolumn{1}{l|}{0.033}          & \multicolumn{1}{l|}{0.070}          & \multicolumn{1}{l|}{0.055}           & 0.008          \\ \cline{1-5} \cline{7-10} \cline{12-15} 
\multicolumn{1}{|l|}{L-DIF} & \multicolumn{1}{l|}{0.021}          & \multicolumn{1}{l|}{0.157}          & \multicolumn{1}{l|}{0.020}          & \textbf{0.000} &  & \multicolumn{1}{l|}{0.016}          & \multicolumn{1}{l|}{0.092}          & \multicolumn{1}{l|}{0.001}           & 0.042          &  & \multicolumn{1}{l|}{0.032}          & \multicolumn{1}{l|}{0.069}          & \multicolumn{1}{l|}{0.054}           & 0.006          \\ \cline{1-5} \cline{7-10} \cline{12-15} 
\multicolumn{1}{|l|}{U-DIF} & \multicolumn{1}{l|}{\textbf{0.000}} & \multicolumn{1}{l|}{0.094}          & \multicolumn{1}{l|}{\textbf{0.004}} & \textbf{0.000} &  & \multicolumn{1}{l|}{\textbf{0.009}} & \multicolumn{1}{l|}{\textbf{0.082}} & \multicolumn{1}{l|}{0.002}           & \textbf{0.000} &  & \multicolumn{1}{l|}{0.028}          & \multicolumn{1}{l|}{0.064}          & \multicolumn{1}{l|}{0.054}           & \textbf{0.000} \\ \cline{1-5} \cline{7-10} \cline{12-15} 
\end{tabular}
\vspace{0.35em}
\caption{Affect of IF training on measures of group fairness. For each metric lower is better. We specifically provide computations of demographic parity (D. Par), equalized odds (Eq. Od), equalized opportunity (Eq. Op), and IF parity (IF. Par). More often than not, we find that DIF training improves measures group fairness; though in some instances makes group fairness considerably worse.  }\label{tab:groupfair}
\end{table}

\subsection{Impact on Group Fairness}\label{sec:performancetradeoff}
While individual fairness is a flexible and key measure of fairness, it is currently the case that no one fairness metric alone captures a complete picture of model bias \citep{fazelpour2020algorithmic}. In this section, we briefly report on the effect of our proposed training method on group fairness notions. In Table~\ref{tab:groupfair}, we report common group fairness notions such as demographic parity, equalized odds, and equalized opportunity for each of our trained models. Additionally we consider IF parity which is taken as the difference in average local individual fairness violation of the model w.r.t the majority group (men) and the minority group (women). Across these metrics, we find that more often than not DIF training also has a positive effect on group fairness. While this is not always the case, Table~\ref{tab:groupfair} clearly establishes that DIF training methods do not necessarily exacerbate other forms of model bias.

\section{Extended Related Works}\label{appendix:relatedworks}

In the main text, Section~\ref{sec:relatedworks} describes the context of our contribution relative to the literature on guarantees for individual fairness (local and global) as well as works related to distributionally robust fairness. In this section, we describe some works in distributional robustness that have a strong relationship to our contribution. In \cite{sinha2017certifying} the authors provide an initial certificate of distributional robustness under many of the same assumptions as our method e.g., restricting to Wasserstein distances between Dirac measures. However, their bound is derived based on a Taylor expansion and relies upon a knowing the value of global Lipschitz constants of the neural network. Our method, on the other hand, makes no such assumptions on the value of Lipschitz constants. Indeed, our bound relies on the H\"{o}lder continuity of the $\mathcal{I}$ function; however, we do not rely on any knowledge of value of the constants for which the  H\"{o}lder condition holds. The requirement of the H\"{o}lder condition is a practical consideration in order to ensure that our bounds converge globally with subgradient methods. Using branch and bound would solve our optimization problem without any continuity assumption on $\mathcal{I}$, albeit at greater computational complexity. For further works focusing on distributional robustness when using the Wasserstien distance, we point interested readers to \cite{blanchet2016quantifying} and \cite{mohajerin2018data}. Since the submission of this work, \citep{doherty2023individual} has investigated individual fairness in the context of Bayesian neural networks and find that uncertainty benefits individual fairness, perhaps due to its relationship to adversarial robustness \citep{carbone2020robustness}.

\begin{figure}
    \centering
\includegraphics[width=1.0\textwidth]{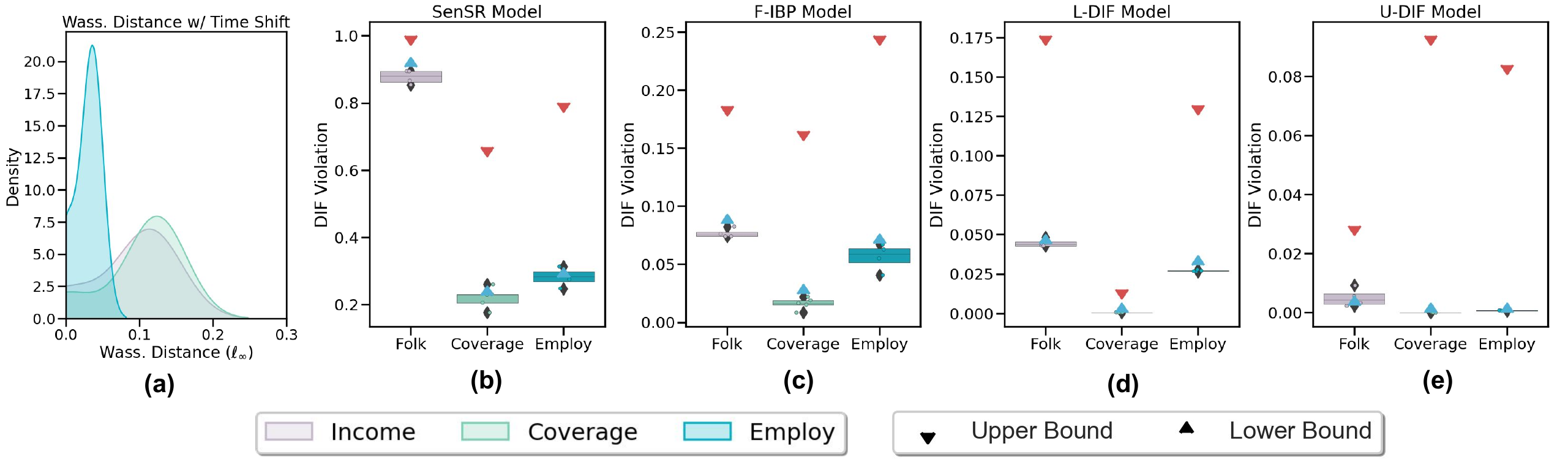}
    \vspace{-1em}
    \caption{Evaluating our bounds versus real-world distribution shifts. \textbf{Column (a):} An empirical distribution of Wasserstein distances between the distribution of individuals from different pairs of years, we certify w.r.t the upper quartile of these distributions. \textbf{Columns (b) - (e):} We plot the local fairncess certificates (LFC) for each of the shifted dataset using a boxplot. We then plot our lower bound on the worst-case DIF violation as a blue triangle and our upper bound on the worst-case DIF violation as a red triangle.}
    \label{fig:timeshift}
\end{figure}

\begin{figure}[h]
    \centering
\includegraphics[width=0.42\textwidth]{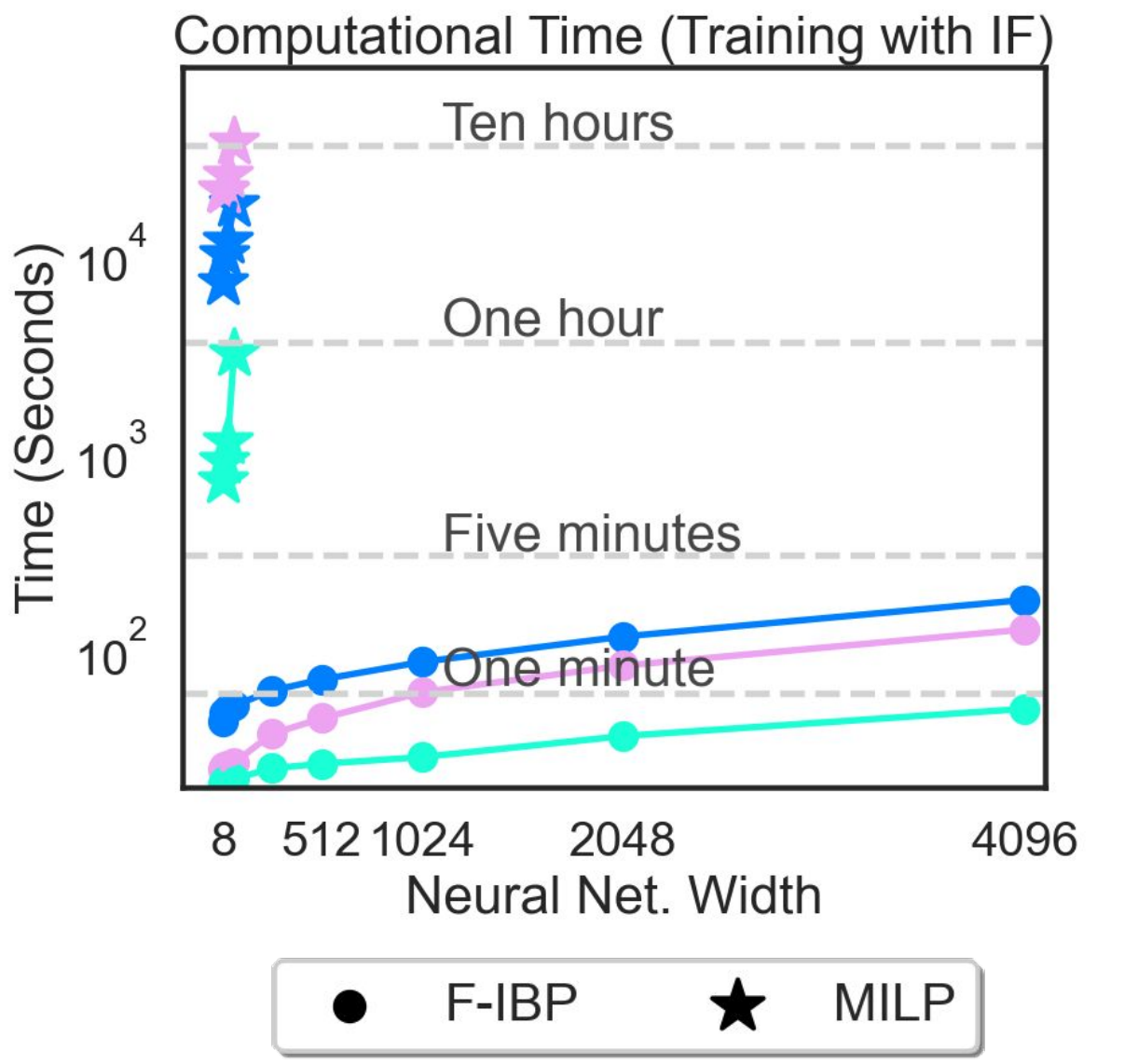}
    \caption{Computational time comparison between MILP training times reported in \citep{benussi2022individual} (plotted as stars) and F-IBP (plotted as circles) demonstrates our methods considerable advantage in scalability.}
    \label{fig:fibptimings}
\end{figure}

\section{Detailed Computations}\label{appendix:computations}

In this section we first describe how fair distance metrics are computed. We then provide the details for the exact computations of IBP and finally, an analysis of the bound presented in Theorem~\ref{lem:intervalboundlemma}.

\subsection{Computing Fair Distance Metrics}

In this subsection we breifly describe methods to compute $d_{\text{fair}}$ metrics. Below, we describe two popular forms of $d_{\text{fair}}$ and how they are computed. In this work, we assume that the $d_{\text{fair}}$ is always given by the SenSR metric.

\paragraph{Weighted $\ell_p$ metric} %\VP{MW: Make it crisper. Say exactly what we do (instead of what 'one' does), i.e. correlation coef etc.} 
A weighted $\ell_p$ metric for a vector $x$ in $\mathbb{R}^n$ takes the form: $\big(\sum_{i=0}^{n-1} \phi_i x^p \big)^{1/p}$ where $\phi$ is the weight vector. In \cite{john2020verifying} the $\phi_i$ is set to 0 for sensitive features and 1 for non-sensitive features. In this work, we attempt to capture the intra-correlation between sensitive and non-sensitive features by setting the weight of non-sensitive features $i$ to be $\phi_i = 1/|\rho_{i,j}|$ where $\rho_{i,j}$ is the Pearson correlation coefficient between the feature $i$ and the sensitive feature $j$.  %For IF, \st{one} takes $\phi_i = 0$ for sensitive features so individuals differing in these features are considered identical under the norm. To account for \st{intra}-feature correlation \st{one} often selects \st{$\phi_i \ll 1$} for all dimensions $i$ that have high correlation with sensitive attributes, which can be calculated on the training set.

\paragraph{SenSR metric}
%\VP{MW: Please see my annotations. Also tease on how exactly we learn the classifier}
While weighted $\ell_p$ metrics are intuitive they may be seen as overly simple. In \cite{yurochkin2019training}, the authors propose the SenSR metric (abbreviated as SR). Assuming only a single sensitive attribute, the SR metric is a Mahalanobis metric computed by first learning a logistic regression model to predict the sensitive attribute ($x_{\text{sens}}$) from the $n-1$ non-sensitive attributes ($x_{\text{nonsens}, i} \forall i \in [n-1]$), e.g., $\hat{x}_{\text{sens}} = exp(a^{T}_j x_{\text{nonsens}} + b_j) / \sum_{j=0}^{K}exp(a^{T}_j x_{\text{nonsens}} + b_j) $. Taking each vector $a_j$ to be the column of a matrix $A$, we have that the \textit{sensitive subspace} matrix $S$ is given by $S = I - P_{ran(A)}$ where $P_{ran(A)}$ is the orthogonal projector of the span of $A$. We then take $S$ to be the matrix for a fair Mahalanobis metric $d_{\text{fair}} = d_{S}(x, y) = \sqrt{(x-y)^\top S^{-1}(x-y)}$.

\subsection{IBP Computations}

Given a NN $f^{\theta}$ and an interval $[x^{L}, x^{U}]$ computed according to Theorem~\ref{lem:intervalboundlemma}, interval bound propagation proceeds by computing an interval over outputs $[y^L, y^U]$ such that $\forall x \in [x^{L}, x^{U}], \ y^{L} \leq f^{\theta}(x) \leq y^U$. Ultimately, computing the largest difference inside of $[y^{L}, y^U]$ will allow us to certify local IF. The output bounds, $[y^{L}, y^U]$, can be computed by performing a forward pass through the neural network with interval bound propagation (IBP). We adopt the notation for IBP proposed in \cite{gowal2018effectiveness} where $z^{k, L}$ and $z^{k, U}$ are the lower and upper bound on the inputs to the $k^{th}$ layer of the neural we can propagate this interval from layer $k$ to $k+1$ as follows:
\begin{align}
    z_\mu^{k} &= (z^{k, L} + z^{k, U})/2, \quad  
    z_r^{k} = (z^{k, U} - z^{k, L})/2 \\
    \zeta_\mu^{k} &= W^{k+1}z^{k}_\mu + b^{k+1},  \quad
    \zeta_r^{k} = |W^{k+1}|z^{k}_r \\
    z^{k+1, L}& = \  \sigma(\zeta_\mu^{k} - \zeta_r^{k}), \quad \ 
    z^{k+1, U} = \sigma(\zeta_\mu^{k} + \zeta_r^{k}) 
\end{align}
By passing the input interval $[x^L, x^U]$ through the above equations, as $z^{0,L}$ and $z^{0,U}$ respectively, we arrive at sound upper and lower bounds of the logits of the network. For the final activation, in our case the softmax, we can compute the lower and upper bounds on the softmax output for class $i$ with:
\begin{align}\label{eq:logitbounds}
\sigma^{K,L}_i &=
   \dfrac{e^{z^{K,L}_i}}{e^{z^{K,L}_i} + \sum_{j \neq i}e^{z^{K,U}_j}},\\
   \sigma^{K,U}_i &= \dfrac{e^{z^{K,U}_i}}{e^{z^{K,U}_i} + \sum_{j \neq i}e^{z^{K,L}_j}}
\end{align}
Finally, we have that for a $c$-class classification network that  $\max_{i \in [c]} \big(\sigma^{K,U}_i - \sigma^{K,L}_i\big) \leq \epsilon$ implies that local individual fairness is satisfied and constitutes a local individual fairness certificate for the network $f^{\theta}$ at $x$. 

\subsection{IBP Analysis}\label{appendix:IBPanalysis}

Any non-exact approximation of S can grow exponentially loose as the number of positive eigenvalues increases.
For instance, even if we approximate S with a hyper-rectangle aligned with the major axes of S, as done in \citep{benussi2022individual, ruoss2020learning}, the ratio of volume of the $\delta$ Mahalanobis ball ($\{x \in \mathbb{R}^n\ |\ d_S(x) \leq \delta \}$) and volume of hyper-rectangle is given by $\frac{\pi^{d/2}}{\Gamma(d/2+1)}\prod_{j | \lambda_j>0}\lambda_j$ to $4^{d/2}\prod_{j | \lambda_j>0}\lambda_j$, where $\lambda_{j}$ is the $j^{th}$ eigenvalue of $S$, and $d$ is the number of positive eigenvalues of S. It is not hard to see that the ratio of volumes decrease super-exponentially. Our method, an approximation aligned with the cannonical axis shares this quality. 
%this for our orthotope approximation empirically through Monte-Carlo estimate of volume of regions. 
We confirm the super-exponential decrease in volume ratio with Monte-Carlo estimation in Figure~\ref{fig:mh_tightness}.  Since this ratio decreases super-exponentially, the constraint imposed by an approximating orthotope could be too stringent when $d$ is large. % In our experiments, although the number of input dimensions is large, the sensitive features and therefore the numerical rank is only less than five (4, 2, 2 for German, Adult and Credit datasets respectively).

\begin{figure}
    \centering
    \includegraphics[width=0.45\textwidth]{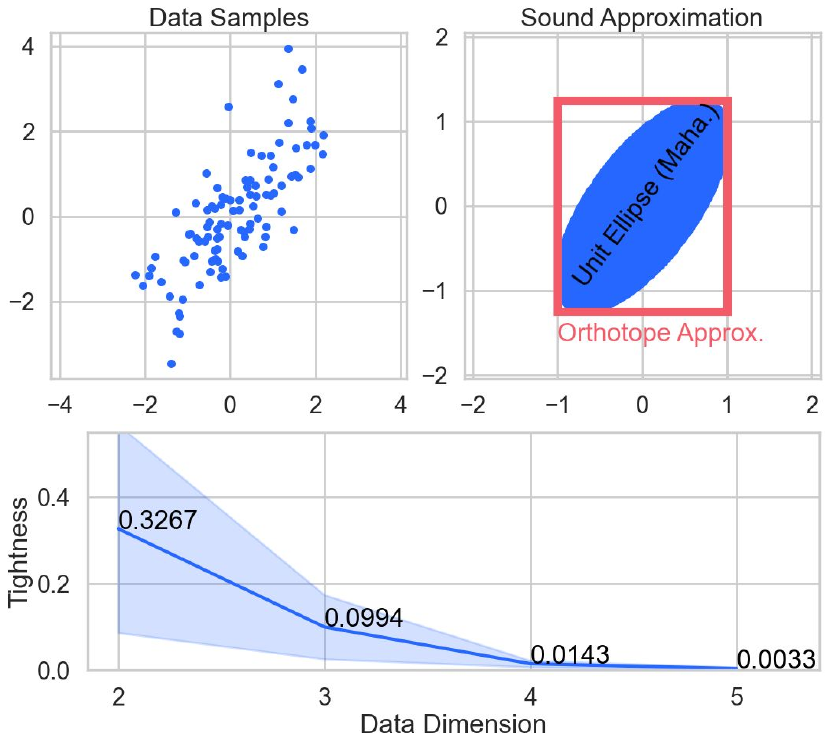}
    \caption{Visualization of our orthotope approximation. \textbf{Top Left:} Synthetic 2D data samples. \textbf{Top Right:} Unit ellipse with respect to the Mahalanobis distance computed on the data (blue) and our over-approximation of this ellipse (red). \textbf{Bottom:} The tightness (ratio of unit ellipse volume and approximation volume) computed for 20 random matrices with different dimensions. The super-exponential decrease of this approximation is discussed in Section~\ref{appendix:IBPanalysis}.
    }
    \label{fig:mh_tightness}
\end{figure}

\section{Proofs}\label{appendix:proofs}

In this section, we first prove Theorem~\ref{lem:intervalboundlemma}. We then show how Equation (1) and Equation~\eqref{eq:optimproblem} are equivalent given that the Wasserstein ball is over Dirac measures. Next, we provide a straight-forward proof that a global optima for Equation~\eqref{eq:upperbound} is a sound upper bound on Equation~\eqref{eq:optimproblem}. Lastly, we prove that the local IF function, $\mathcal{I}$ is H\"{o}lder continuous, therefore suitably chosen subgradient methods should converge to the global solution. 

\subsection{Proof of Theorem 5.1}
\begin{theorem}
Given a positive semi-definite matrix $S \in \mathbb{R}^{m \times m}$, a feature vector $x' \in \mathbb{R}^{m}$, and a similarity theshold $\delta$, all vectors $x'' \in \mathbb{R}^n$ satisfying $d_S(x', x'') \leq \delta$ are contained within the axis-aligned orthope:
$$ \Big[ x' - \delta\sqrt{d}, x' + \delta\sqrt{d} \Big] $$
where $d = diag(S)$, the vector containing the elements along the diagonal of $S$.
\end{theorem}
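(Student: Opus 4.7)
My plan is to reduce the theorem to a per-coordinate bound and then prove that bound with a single application of the Cauchy--Schwarz inequality. First, by translation invariance I would assume without loss of generality that $x' = 0$ and set $v = x'' - x'$. Since the paper defines the Mahalanobis distance by $d_S(x,y)^2 = (x-y)^\top S^{-1}(x-y)$, the hypothesis $d_S(x', x'') \leq \delta$ becomes $v^\top S^{-1} v \leq \delta^2$. The conclusion to be shown is that for every coordinate index $i$ we have $|v_i| \leq \delta\sqrt{S_{ii}}$, which is precisely the statement that $x''$ lies in the axis-aligned orthotope $[x' - \delta\sqrt{d},\, x' + \delta\sqrt{d}]$ with $d = \mathrm{diag}(S)$.

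For the core step I would factor the PSD matrix as $S = S^{1/2} S^{1/2}$ and write $v_i = e_i^\top v = (S^{1/2} e_i)^\top (S^{-1/2} v)$. Cauchy--Schwarz then gives $|v_i| \leq \|S^{1/2} e_i\| \cdot \|S^{-1/2} v\| = \sqrt{e_i^\top S e_i} \cdot \sqrt{v^\top S^{-1} v} = \sqrt{S_{ii}} \cdot \sqrt{v^\top S^{-1} v} \leq \delta\sqrt{S_{ii}}$. Applying the same inequality to $-e_i$ yields the matching lower bound, so $v_i \in [-\delta\sqrt{S_{ii}},\, \delta\sqrt{S_{ii}}]$ for every $i$. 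Undoing the translation $v = x'' - x'$ recovers exactly the orthotope in the theorem statement, and the clean linear dependence on $\delta$ is explained by scaling inside the Cauchy--Schwarz estimate. This is equivalent to the Lagrangian argument sketched by the authors, but avoids having to change variables through the square root explicitly.

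The main subtlety I would need to handle is the case in which $S$ is only positive semidefinite, so that $S^{-1}$ does not literally exist; in that setting $d_S$ should be interpreted via the Moore--Penrose pseudoinverse $S^{+}$, and one must restrict attention to vectors $v$ lying in $\mathrm{range}(S)$, since any component outside $\mathrm{range}(S)$ would make the Mahalanobis distance infinite and is therefore automatically excluded. The Cauchy--Schwarz computation then goes through verbatim using the PSD square root $S^{1/2}$ and its pseudoinverse. I would also briefly note tightness: equality in the Cauchy--Schwarz step is attained by $v \propto S e_i$ scaled to saturate $v^\top S^{-1} v = \delta^2$, which recovers $v_i = \delta\sqrt{S_{ii}}$ exactly, so the stated orthotope is the minimum axis-aligned enclosing box of the ellipsoid $\{v : v^\top S^{-1} v \leq \delta^2\}$. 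I do not anticipate any significant obstacle beyond this pseudoinverse bookkeeping; the heavy lifting is a single application of Cauchy--Schwarz once the Mahalanobis definition is unpacked.
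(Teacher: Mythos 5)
Your proposal is correct and follows essentially the same route as the paper's proof: both reduce the claim to the per-coordinate problem $\max\{e_i^\top v : v^\top S^{-1}v \le \delta^2\}$ and solve it via the square-root factorization of $S$, with your direct Cauchy--Schwarz step being equivalent to the paper's change of variables to the unit ball. Your extra care with the degenerate PSD case (pseudoinverse, restriction to $\mathrm{range}(S)$) and the tightness remark are welcome refinements but do not change the underlying argument.
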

%\vspace{-1em}
\begin{proof}
Consider the case of $\delta = 1$. The width of the desired interval along the $i^{th}$ dimension can be obtained by solving $\max_{x''} e_i^T(x'' - x')\ s.t.\ d_{S}(x', x'') \leq 1$ where $e_i$ is the $i^{th}$ canonical basis vector. Let $S = R^T R$ where $R$ is the matrix square root of $S$, we can re-write the optimization in the rotated space by change of variables, $u:=R^{T}(x'' - x')$. W.r.t. $u$ the optimization becomes $\max_{u}R^T_i u \ \  s.t.\ ||u|| \leq 1$ where $R_i$ is the $i^{th}$ column of $R$. The solution of this optimization problem is then $R_i^T R_i / ||R_i|| = \sqrt{R_i^T R_i} = \sqrt{S}_{i,i}$. Generalizing to each dimension $i$ we have the bound $\sqrt{diag(S)}$, as desired. Because $S^{-1}$ is a linear transformation this bound remains sound when scaled by $\delta$ or translated to be centered at an arbitrary feature vector $x$. Proof of results related to Theorem~\ref{lem:intervalboundlemma} can be found in ~\cite{emrich2013optimal, MahaBound}.
\end{proof}

\begin{lemma}
Computing IBP w.r.t. the interval $ [ x' - \delta\sqrt{d}, x' + \delta\sqrt{d}] $ from Theorem~\ref{lem:intervalboundlemma} results in a sound over-approximation of the individual fairness violation as defined in Definition~\ref{def:IFdef}.
\end{lemma}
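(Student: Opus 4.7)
The plan is to chain together two soundness properties: the set-containment established in Theorem~\ref{lem:intervalboundlemma} and the standard soundness of interval bound propagation. Concretely, I would proceed in three short steps.

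First, I would invoke Theorem~\ref{lem:intervalboundlemma} directly to obtain the set inclusion
\[
\{x'' \in \mathbb{R}^m : d_S(x', x'') \leq \delta\} \;\subseteq\; [x' - \delta\sqrt{d},\, x' + \delta\sqrt{d}].
\]
This reduces the question of bounding $f^{\theta}$ over the Mahalanobis ball to bounding it over the larger axis-aligned orthotope. Second, I would appeal to the soundness of IBP (as recalled in Appendix~\ref{appendix:computations} and established in the cited works on interval analysis for neural networks): applied to the orthotope $[x' - \delta\sqrt{d},\, x' + \delta\sqrt{d}]$, IBP returns output bounds $[y^L, y^U]$ such that $f^{\theta}(x) \in [y^L, y^U]$ for every $x$ in the orthotope. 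Composing this with the inclusion above, the same bounds hold for every $x''$ with $d_S(x', x'') \leq \delta$, and in particular $f^{\theta}(x') \in [y^L, y^U]$.

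Third, I would translate the output interval into a bound on the IF violation. For any admissible $x''$ one has $f^{\theta}(x'), f^{\theta}(x'') \in [y^L, y^U]$, so
\[
|f^{\theta}(x') - f^{\theta}(x'')| \;\leq\; y^U - y^L.
\]
Taking the supremum over $x''$ in the fair ball shows that $\mathcal{I}(f^{\theta}, x', \delta) \leq y^U - y^L$, which is exactly the claim that the IBP-based quantity $\overline{\mathcal{I}(f^{\theta}, x', \delta)}$ over-approximates the true IF violation. For the multi-class case handled by the softmax bounds in Equation~\eqref{eq:logitbounds}, the same argument applies componentwise, and taking the maximum over classes of $\sigma^{K,U}_i - \sigma^{K,L}_i$ yields a sound upper bound on the largest per-class discrepancy.

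The argument is essentially a two-line composition, so there is no substantial technical obstacle: the only subtlety worth flagging is that the orthotope bound is generally strictly larger than the Mahalanobis ball, so the soundness holds but tightness may degrade, as quantified in Appendix~\ref{appendix:IBPanalysis}. I would close by noting that this looseness does not affect soundness, which is the sole content of the lemma.
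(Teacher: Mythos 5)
Your proposal is correct and follows essentially the same route as the paper's proof: compose the orthotope containment from Theorem~\ref{lem:intervalboundlemma} with the soundness of IBP over that orthotope, then bound $|f^{\theta}(x') - f^{\theta}(x'')|$ by the output interval width $|y^{U}-y^{L}|$ to certify the condition in Definition~\ref{def:IFdef}. Your added remarks on the multi-class softmax case and on looseness versus soundness are consistent with the paper but not needed for the lemma itself.
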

\begin{proof}
IBP produces output bounds $[y^{L}, y^{U}]$ such that $\forall x'' \in [x' - \delta\sqrt{d}, x' + \delta\sqrt{d}], f^{\theta}(x'') \in [y^{L}, y^{U}]$. By taking $\overline{\epsilon} = |y^{U} - y^{L}|$, we can certify that $\forall x'' \ s.t. \ d_{\text{fair}}(x', x'') \leq \delta \implies |f^{\theta}(x') - f^{\theta}(x'')| \leq \overline{\epsilon}$ which is exactly the condition needed to prove that local individual fairness holds according to Definition~\ref{def:IFdef}.
\end{proof}

\subsection{Equivalence between Equation (1) and Equation~\eqref{eq:optimproblem} for Dirac Measures}

Here we briefly describe the equivolence between Equation (1) and Equation~\eqref{eq:optimproblem}  when both distributions are Dirac measures, i.e., distributions given by a set of observed samples $\{x^{(i)}\}_{i=1}^{n}$. In this case, the reference distribution, $\hat{P}$, has support $\{\boldsymbol{\delta}_{x^{(i)}}\}_{i=1}^{n}$ where $\boldsymbol{\delta}_{x^{(i)}}$ is the Dirac function at location $x^{(i)}$, and we assume each observed sample has equivalent probability $1/n$. We then would like to measure the $p-$Wasserstein distance between $\hat{P^{0}}$ and another distribution. We make the assumption that the other distribution is also a Dirac measure. In this case, the $p-$Wasserstein distance between $\hat{P}$ and another Dirac measure $\hat{Q}$ with samples $\{z^{(i)}\}_{i=1}^{n}$ is known to be given by: 
$$ W_p (\hat{P}, \hat{Q}) := \inf_{\pi} \Big(\sum_{i=1}^{n} || x^{(i)} - z^{\pi(i)} ||^p \Big)^{1/p} $$
where $\pi$ is the set of all permutations of the set $[n]$. Without loss of generality, one can express $\hat{Q}$ as being given by a set of perturbation vectors $\phi^{(i)}$ such that the elements of $\hat{Q}$ are defined to be $\{ x^{(i)} + \phi^{(i)} \}_{i=1}^{n}$. In this case, it is clear that the $p-$Wasserstein distance is simply $\sum_{i=1}^{n}(||\phi^{(i)}||^p)^{1/p}$. Further, this makes it straight-forward to see that all distributions $\hat{Q}$ that are within a $p-$Wasserstein ball around $\hat{P}$ satisfy the constraint that $\sum_{i=1}^{n}(||\phi^{(i)}||^p)^{1/p} < \gamma$. Taking both sides to the $p^{th}$ power, we notice that this is precisely the constraint in Equation (1). Thus, the $\hat{Q}$ maximizing Equation~\eqref{eq:optimproblem} corresponds to the Dirac measure maximizing Equation (1), as desired.

\paragraph{Alternate and general proof}
\begin{theorem}
    Inputs from a distribution that is within $\gamma$ p-Wasserstein distance from the source empirical distribution $\hat{P}(\vx)=\sum_{i=1}^n\delta(\vx=\vx_i)$ must be contained in the region given by $\sum_{i=1}^n B(\vx_i, \delta_i)$ where $\delta_i>0, \sum_{i=1}^n\delta_i^p\leq \gamma^p$. In other words, the constraint in~\eqref{eq:optimproblem} over-constraints and therefore satisfies the constraints of (1). % ~\eqref{eqn:difconstraint1}.
\end{theorem}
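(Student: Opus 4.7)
The plan is to invoke the Kantorovich formulation of $W_p$ and exploit the atomic structure of the empirical measure. Fix any target distribution $Q$ with $W_p(\hat P, Q) \leq \gamma$, and pick an optimal (or $\varepsilon$-optimal) coupling $\pi \in \Pi(\hat P, Q)$ satisfying $\int \|\vx - y\|^p \, d\pi(\vx, y) \leq \gamma^p$. Since $\hat P$ is a uniform sum of Dirac masses at $\vx_1, \ldots, \vx_n$, disintegration against the first marginal writes $\pi$ uniquely as $\tfrac{1}{n}\sum_{i=1}^{n} \delta_{\vx_i} \otimes \mu_i$ for probability measures $\mu_i$, and matching the second marginal forces $Q = \tfrac{1}{n}\sum_i \mu_i$. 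In particular $\mathrm{supp}(Q) \subseteq \bigcup_i \mathrm{supp}(\mu_i)$, which is the structural fact I will turn into the ball-union containment.

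Next I would define the radii by $\delta_i := \sup \{\|\vx_i - y\| : y \in \mathrm{supp}(\mu_i)\}$, which immediately yields $\mathrm{supp}(\mu_i) \subseteq \overline{B(\vx_i, \delta_i)}$ and hence the desired containment $\mathrm{supp}(Q) \subseteq \bigcup_i B(\vx_i, \delta_i)$. To control $\sum_i \delta_i^p$, I would argue that the outer adversary in Problem~\ref{prob:certification} does not lose anything by replacing each $\mu_i$ with a point mass located at the farthest point of its support, since the inner functional $\mathcal{I}$ is non-negative and the $L^p$-transport cost is convex in $y$. This reduction collapses $\pi$ to a Monge plan between $\hat P$ and a Dirac $\hat Q = \tfrac{1}{n}\sum_i \delta_{\vx_i + \phi^{(i)}}$ with $\|\phi^{(i)}\| = \delta_i$, whose transport cost is $\tfrac{1}{n}\sum_i \delta_i^p$, and is bounded above by $\gamma^p$ by the Wasserstein constraint—recovering precisely the constraint of Equation~\eqref{eq:optimproblem} (up to the $1/n$ normalization absorbed in the statement).

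The main obstacle I anticipate is the gap between the supremum-based radius $\delta_i$ that containment requires and the $L^p$-averaged quantity that $W_p$ natively controls: for a generic $\mu_i$ one has $\sup_y \|\vx_i - y\|^p \geq \int \|\vx_i - y\|^p \, d\mu_i(y)$, so the sum-of-sup constraint is strictly stronger than the integrated transport cost. The resolution is the concentration-at-the-extreme step above, which exploits the specific form of the fairness objective to reduce to Monge couplings where the two quantities coincide. This reduction is exactly what underwrites the phrase ``over-constraints and therefore satisfies the constraints of~\eqref{eq:difconstraint1}'': the discrete ball-union constraint enumerates a superset of all admissible supports for distributions in the Wasserstein ball, so any certificate computed against it is valid for the original distributional problem.
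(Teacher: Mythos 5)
Your setup---disintegrating an optimal coupling as $\pi=\frac{1}{n}\sum_{i=1}^n\delta_{\vx_i}\otimes\mu_i$, so that $Q=\frac{1}{n}\sum_i\mu_i$ and $\mathrm{supp}(Q)\subseteq\bigcup_i\mathrm{supp}(\mu_i)$, then taking $\delta_i$ to be the supremal displacement within $\mathrm{supp}(\mu_i)$---is a cleaner version of the paper's nearest-atom decomposition, and the containment $\mathrm{supp}(Q)\subseteq\bigcup_i B(\vx_i,\delta_i)$ indeed comes for free. The genuine gap is exactly the one you flag and then do not close: bounding $\sum_i\delta_i^p$ by $\gamma^p$. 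Your ``concentration-at-the-extreme'' step replaces each $\mu_i$ by a Dirac at the farthest point of its support; this can only \emph{increase} the transport cost, from $\frac{1}{n}\sum_i\int\|\vx_i-y\|^p\,d\mu_i(y)$ (which is what $W_p(\hat{P},Q)\le\gamma$ controls) to $\frac{1}{n}\sum_i\delta_i^p$. Hence the resulting Monge plan need not be feasible for the $\gamma$-ball, and your assertion that its cost ``is bounded above by $\gamma^p$ by the Wasserstein constraint'' is precisely the inequality that needs proof---and it is false for general $Q$. Take $n=1$ and $Q=(1-\eta)\delta_{\vx_1}+\eta\,\delta_{y}$ with $\|y-\vx_1\|=R$ and $\eta R^p\le\gamma^p$: then $W_p(\hat{P},Q)\le\gamma$ while $\delta_1=R$ can be arbitrarily larger than $\gamma$, so no admissible choice of radii covers $\mathrm{supp}(Q)$. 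Non-negativity of $\mathcal{I}$ and convexity of the cost justify that the \emph{objective} does not decrease under your replacement, but say nothing about the replacement remaining inside the Wasserstein ball, which is what the constraint transfer requires.

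The containment claim is safe under the restriction used in the paper's preceding subsection, namely that the shifted distribution is itself an equally weighted $n$-point empirical (Dirac) measure: there the optimal coupling is a permutation matching, each $\mu_i$ is a single Dirac $\delta_{\vx_i+\phi^{(i)}}$, the supremum $\delta_i=\|\phi^{(i)}\|$ coincides with the transported distance, and the budget $\frac{1}{n}\sum_i\delta_i^p\le\gamma^p$ follows immediately. (For calibration, the paper's own ``general'' proof shares the weakness you ran into: its displayed chain establishes $W_p(\hat{P},Q)\le(\sum_i\delta_i^p)^{1/p}$, which is the wrong direction for concluding $\sum_i\delta_i^p\le\gamma^p$ from $W_p\le\gamma$.) So your proposal correctly isolates the crux---sup-per-atom versus $L^p$-averaged displacement---but the proposed resolution does not repair it; a correct argument either restricts to empirical/Dirac shifts or works with average rather than worst-case displacement per atom.
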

\begin{proof}
Target distributions $Q$ that are $\gamma$ distance within the source distribution must satisfy the following constraint for all possible joint distributions: $\tau(\hat{P}, Q)$ over distributions $\hat{P}, Q$.
\begin{align*}
W_p(\hat{P}, Q) = \left(\inf_{\kappa\sim \tau(\hat{P}, Q)} \mathbb{E}_{(x, y)\sim \kappa}[d(x, y)^p]\right)^{1/p}
\end{align*}
Since $\hat{P}$ is an empirical distribution, minimum over joint distributions is obtained when an instance $y$ sampled from $Q$ is coupled with a closest point in $\hat{P}$ denoted $q(y, \hat{P})=\argmin_{i\in[1, n]} d(\vx_i, y)$.
We therefore have the following simplified expression for $W_p$.
\begin{align*}
W_p(\hat{P}, Q) = \mathbb{E}_{y\sim Q}[d(q(y, \hat{P}), y)^p]^{1/p}
\end{align*}
We denote by $\delta_i$ the maximum distance between $\vx_i$ and any point that is sampled from Q that is coupled with $\vx_i$, i.e.
\begin{align*}
    \delta_i \triangleq \max_{y\sim Q} d(q(y, \hat{P})=\vx_i, y)
\end{align*}
The definition of $\delta_i$ leads to an upper bound on the $W_p$ distance as follows. 
\begin{align*}
    W_p(\hat{P}, Q) &= \mathbb{E}_{y\sim Q}[d(q(y, \hat{P}), y)^p]^{1/p}\\
    &\leq \sum_i \left(\Pr(q(y, \hat{P})=\vx_i)\delta_i^p\right)^{1/p}\\
    &\leq \left((\sum_i \Pr(q(y, \hat{P})=\vx_i))(\sum_i \delta_i^p)\right)^{1/p}\\
    &=(\sum_i \delta_i^p)^{1/p}
\end{align*}
Following the definition of $\delta_i$, the support of Q (i.e. set of all points with non-zero probability) must be contained in $\cup_{i=1}^n B(\vx_i, \delta_i)$, which in itself is contained in $\sum_i B(\vx_i, \delta_i)$ with the additional constraint that $\sum_i \delta_i^p\leq \gamma^p$
\end{proof}

\subsection{Proof that Global Maximum of Equation~\eqref{eq:upperbound} is a Certificate}\label{appendix:upperboundproof}

Recall:
\paragraph{Theorem \ref{thm:upperboundthm}}
  Given an optimal assignment of $\{\varphi^{(i)} \}_{i=1}^n$ in Equation~\eqref{eq:upperbound}, the corresponding $\overline{\epsilon}$ is a sound upper-bound on the DIF violation of the model and therefore, is a certificate that no $\gamma$-Wasserstien distribution shift can cause the individual fairness of the model to exceed $\overline{\epsilon}$.

\begin{proof}
Let the set of vectors $\{\phi^{\star(i)}\}_{i=1}^{n}$ represent the global maximizing assignment of Equation~\eqref{eq:optimproblem}. Let $\{\varphi^{\star(i)}\}_{i=1}^{n}$ be the set of real values such that $\varphi^{\star(i)} = ||\phi^{\star(i)}||$.
From Theorem \ref{thm:upperboundthm} we have that for each $i$ that $\mathcal{I}(f^{\theta}, x^{(i)} + \phi^{\star(i)}, \delta) \leq \overline{\mathcal{I}(f^{\theta}, x^{(i)}, \delta + \varphi^{\star(i)})}$, 
thus we have that $\sum_{i=1}^{n} \mathcal{I}(f^{\theta}, x^{(i)} + \phi^{\star(i)}, \delta) \leq \sum_{i=1}^{n} \overline{\mathcal{I}(f^{\theta}, x, \delta + \varphi^{\star(i)} )}$. Given that $\{\varphi^{\star(i)}\}_{i=1}^{n}$ is a feasible assignment of the optimization problem and that it upper-bounds the maximum of Equation~\eqref{eq:optimproblem}, we have that the global maximum of Equation~\eqref{eq:upperbound}, must be an upper bound on Equation~\eqref{eq:optimproblem}. This is due to the fact that either  $\{\varphi^{\star(i)}\}_{i=1}^{n}$ is the maximizing assignment, or a the global maximum returns a value larger than that returned by $\{\varphi^{\star(i)}\}_{i=1}^{n}$ which would also be an upper bound to Equation~\eqref{eq:optimproblem}.
\end{proof}

\subsection{Proof $\mathcal{I}$ is H\"{o}lder Continuous (therefore globabally converges)}

Let $\vx_i$ denote a training instance in the original distribution, $\vx^{(1)}_i, \vx^{(2)}_i$ denote the points in the neighbourhood of $\vx_i$ such that 
\begin{align*}
\vx^{(1)}_i &= \vx_i + \varphi_i\\
\vx^{(2)}_i &= \argmax_{\vx\in B_\delta(\vx^{(1)}_i)} |f(\vx) - f(\vx^{(1)}_i)|.
\end{align*}
We have the following inequality for the absolute difference between function values at $\vx^{(1)}_i$ and $\vx^{(2)}_i$. 
\begin{align*}
    |f^\theta(\vx^{(1)}_i) - f^\theta(\vx^{(2)}_i)| &\leq     |f^\theta(\vx^{(1)}_i) - f^\theta(\vx_i) + f^\theta(\vx_i) - f^\theta(\vx^{(2)}_i)|\\
    \leq& |f^\theta(\vx^{(1)}_i) - f^\theta(\vx_i)| + |f^\theta(\vx_i) - f^\theta(\vx^{(2)}_i)|\\
    \leq & 2I[f^\theta; \vx_i, \varphi_i + \delta] \\
    \text{which follows because } & |\vx^{(2)}_i - \vx_i| \leq |\vx^{(1)}_i - \vx_i| +  |\vx^{(2)}_i - \vx_i|\leq \varphi_i + \delta\\
    \text{where I is as defined above }\\\text{i.e.}
    I[f^\theta; \vx, \delta] \triangleq & \max_{\hat{\vx}\in B_\delta(\vx)} |f^\theta(\hat{\vx}) - f^\theta(\vx)|\\ 
\end{align*}

Note that the inequality $|f^\theta(\vx^{(1)}_i) - f^\theta(\vx^{(2)}_i)|\leq 2I[f^\theta; \vx_i, \varphi_i + \delta]$ holds for any arbitrary value of $\vx_i$ and $\varphi_i$. 

Therefore eq3a becomes, 
\begin{align*}
    I(f^\theta; \vx^{(1)}_i, \delta)&\leq 2I[f^\theta; \vx_i, \varphi_i+\delta]\\
    \implies \frac{1}{n}\sum_{i=1}^n I(f^\theta; \vx^{(1)}_i, \delta) & \leq \frac{2}{n}\sum_{i=1}^n I(f^\theta; \vx_i, \varphi_i+\delta)
\end{align*}
The upper bound on the RHS can be obtained easily since $g_i(\varphi_i) = I(f^\theta; \vx_i, \delta+\varphi_i)$ is quasiconvex, which is easily seen by noting that any monotonic function is quasiconvex. 

Moreover, if f is C-lipschitz, then $\sum_i g_i(\varphi_i)$ satisfies Holder condition of order p for p=1. The proof is as follows. 

\begin{align*}
    |\sum_i g(\varphi_i^{(1)}) - \sum_i g(\varphi_i^{(2)})| &\leq \sum_i |I(f^\theta; \vx_i, \varphi_i^{(1)}) - I(f^\theta; \vx_i, \varphi_i^{(2)})|\\
    &= \sum_i |f^\theta(\vx_i^{(1)}) - f^\theta(\vx_i)| - I(f^\theta; \vx_i, \varphi_i^{(2)})\\
    \text{where WLOG, we assume }& \varphi_i^{(2)}\leq \varphi_i^{(1)}\\
    & \leq \sum_i |f^\theta(\widehat{\vx_i^{(1)}})\pm C(\varphi_i^{(1)}-\varphi_i^{(2)}) - f^\theta(\vx_i)| - I(f^\theta; \vx_i, \varphi_i^{(2)})\\
    &\text{where }\widehat{\vx_i^{(1)}}=\mathbb{P}_{\varphi_i^{(2)}}[\varphi_i^{(1)}]\\
    &\leq \sum_i|f^\theta(\widehat{\vx_i^{(1)}}) - f^\theta(\vx_i)| - I(f^\theta; \vx_i, \varphi_i^{(2)}) + C(\varphi_i^{(1)}-\varphi_i^{(2)})\\
    &\leq \sum_i I(f^\theta; \vx_i, \varphi_i^{(2)}) - I(f^\theta; \vx_i, \varphi_i^{(2)}) + C(\varphi_i^{(1)}-\varphi_i^{(2)})\\
    & = \sum_i C(\varphi_i^{(1)}-\varphi_i^{(2)})\\
    & = C|\varphi^{(1)} - \varphi^{(2)}|\\
    \text{where } \vx_i^{(1)} &= \argmax_{\vx\in B(\vx_i, \varphi_i^{(1)})} |f^\theta(\vx) - f^\theta(\vx_i)|\\
    \text{and } \mathbb{P}_{\varphi}(\vx) & \text{ is an operator to project a vector $\vx$ into $B(\vx, \varphi)$}
\end{align*}

Bounded optimization of quasiconvex functions that satisfy Holder condition of order p is shown to converge to global optimum when using exact or in-exact subgradient optimization methods in \citet{hu2020convergence}.

%\section{NeurIPS 2023 Checklist}

\end{document}